\newtheorem{theorem}{Theorem}[section]
\newtheorem{proposition}[theorem]{Proposition}
\newtheorem{corollary}[theorem]{Corollary}
\newtheorem{definition}{Definition}[section]
\definecolor{lavenderblue}{rgb}{0.9, 0.9, 1.0}
\newcommand{\siximagesquare}[6]{
        \begin{minipage}[c]{.664\linewidth}
            \includegraphics[width=\linewidth]{#1}
        \end{minipage}\hspace{0.004\linewidth}%
        \begin{minipage}[c]{.33\linewidth}
            \includegraphics[width=\linewidth]{#2}
            \vspace{-0.01\linewidth} 
            \includegraphics[width=\linewidth]{#3}
        \end{minipage}
        \begin{minipage}[c]{\linewidth}
            \vspace{-0.004\linewidth} 
            \includegraphics[width=.33\linewidth]{#4}\hspace{0.004\linewidth}%
            \includegraphics[width=.33\linewidth]{#5}\hspace{0.004\linewidth}%
            \includegraphics[width=.33\linewidth]{#6}
        \end{minipage}
}
\definecolor{cvprblue}{rgb}{0.21,0.49,0.74}
\definecolor{baselinecolor}{gray}{.92}
\newcommand{\basecell}[1]{\cellcolor{baselinecolor}}
\definecolor{vyellow}{rgb}{0.7490,0.5647,0}
\definecolor{cvprblue}{rgb}{0.21,0.49,0.74}
\title{\vspace{-2em}Don't drop your samples! Coherence-aware training benefits Conditional diffusion}
\author{Nicolas Dufour$^{1,2}$, Victor Besnier$^{3}$, Vicky Kalogeiton$^{2}$, David Picard$^{1}$ \\
\footnotesize{$^1$LIGM, École des Ponts, Univ Gustave Eiffel, CNRS, Marne-la-Vallée, France $\quad$
$^2$LIX, CNRS, École Polytechnique, IP Paris $\quad$
$^3$Valeo.ai, Prague}
}
\begin{document}
\twocolumn[{%
\renewcommand\twocolumn[1][]{#1}
\maketitle
\vspace{-2em}
\begin{tikzpicture}

\def\pagewidth{17} %
\pgfmathsetmacro{\fullsize}{0.25*\pagewidth}
\pgfmathsetmacro{\halfsize}{0.5*\fullsize}
\pgfmathsetmacro{\quartersize}{0.5*\halfsize}

\foreach \i in {0,1,2,3} {
    \node[inner sep=0pt, minimum size=\fullsize cm] at (\i*\fullsize cm, 0) {
        \includegraphics[width=\fullsize cm, height=\fullsize cm, clip]{images/teaser_v2/image_\i.png}
    };
}

\foreach \i in {0,1,2,3} {
    \foreach \j in {0,1} {
        \pgfmathsetmacro{\xpos}{\i*\fullsize +(\j-0.5)* \halfsize}
        \pgfmathsetmacro{\ypos}{-\fullsize + \halfsize/2}
        \node[inner sep=0pt, minimum size=\halfsize cm] at (\xpos cm, \ypos cm) {
            \includegraphics[width=\halfsize cm, height=\halfsize cm, clip]{images/teaser_v2/image_\the\numexpr4+2*\i+\j.png}
        };
    };
}
\foreach \i in {0,1,2,3} {
    \foreach \j in {0,1} {
        \pgfmathsetmacro{\xpos}{\i*\fullsize +(\j-0.5)* \halfsize}
        \pgfmathsetmacro{\ypos}{-\fullsize - \halfsize/2}
        \node[inner sep=0pt, minimum size=\halfsize cm] at (\xpos cm, \ypos cm) {
            \includegraphics[width=\halfsize cm, height=\halfsize cm, clip]{images/teaser_v2/image_\the\numexpr12+2*\i+\j.png}
        };
    };
}

\end{tikzpicture}
\vspace{-0.4em}\captionof{figure}{Images generated from our model, CAD. Our model showcase high visual quality, aesthetics and prompt following. \vspace{0.4em}}
\label{fig:teaser_v2}
}]

\begin{abstract}
    \noindent Conditional diffusion models are powerful generative models that can leverage various types of conditional information, such as class labels, segmentation masks, or text captions. However, in many real-world scenarios, conditional information may be noisy or unreliable due to human annotation errors or weak alignment. In this paper, we propose the Coherence-Aware Diffusion (CAD), a novel method that integrates coherence in conditional information into diffusion models, allowing them to learn from noisy annotations without discarding data. We assume that each data point has an associated coherence score that reflects the quality of the conditional information. We then condition the diffusion model on both the conditional information and the coherence score. In this way, the model learns to ignore or discount the conditioning when the coherence is low. We show that CAD is theoretically sound and empirically effective on various conditional generation tasks. Moreover, we show that leveraging coherence generates realistic and diverse samples that respect conditional information better than models trained on cleaned datasets where samples with low coherence have been discarded. Code and weights can be found \href{https://nicolas-dufour.github.io/cad.html}{here}.
\end{abstract}
    
\section{Introduction}

Conditional Diffusion models excel in image generation while affording greater user control over the generation process by integrating additional information~\cite{saharia2022photorealistic,balaji2022ediffi}. This extra data enables the model to guide the generated image towards a specific target, leading to improved various applications including high-quality text-to-image generation~\cite{rombach2022high}, as well as other modalities such as depth or human body pose~\cite{zhang2023adding}. Furthermore, the accessibility of open-source models like Stable Diffusion has democratized the use of this technology, already causing significant shifts in various domains such as design, art, and marketing.

\begin{figure*}[t]
  \begin{subfigure}[t]{0.65\linewidth}    
    \begin{tikzpicture}
        \foreach \i in {0,...,4} {
            \node at (2.2*\i,0) {\includegraphics[width=0.194\linewidth]{images/teaser/space_racoon/\i.png}};
            \node at (2.2*\i,-2.2) {\includegraphics[width=0.194\linewidth]{images/teaser/avocado_chair/\i.png}};
        }
       \draw[->, thick] (-0.6,-3.6) -- node[midway, fill=white] {Coherence score} (9,-3.6);
       \node at (-0.9,-3.6) {0};
        \node at (9.3,-3.6) {1};
    \end{tikzpicture}
    \caption{}
    \label{fig:clip_vs_conf_viz}
\end{subfigure}
\begin{subfigure}[t]{0.322\linewidth}
\includegraphics[width=\linewidth]{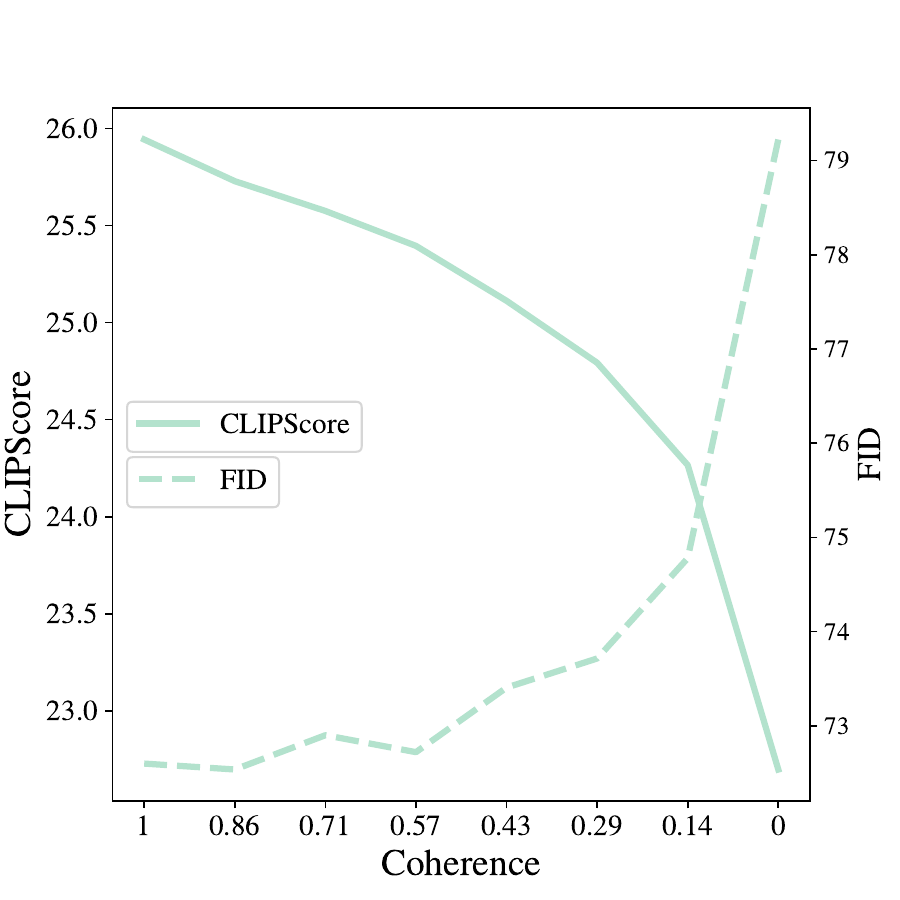}
    \caption{}
    \label{fig:clip_vs_confidence}
\end{subfigure}

    \caption{(a) Examples of images generated with the input \emph{coherence score} between the prompt and the target image. The score varies from 0 (no coherence) to 1 (maximum coherence). Higher coherence scores tend to generate images that adhere more effectively to the prompt. Top prompt: \emph{``a raccoon wearing an astronaut suit. The racoon is looking out of the window at a starry night; unreal engine, detailed, digital painting,cinematic,character design by pixar and hayao miyazaki, unreal 5, daz, hyperrealistic, octane render''}, bottom prompt: \emph{``An armchair in the shape of an avocado''} (b) Increasing the coherence from 0 to 1, CLIPScore increases and FID decreases.}
\end{figure*}

Training conditional diffusion models requires substantial volumes of paired data comprising the target image and its corresponding condition. In text-to-image generation, this pairing involves an image and a descriptive caption that characterizes both the content and the style of the image. Similarly, for class conditional generation, the pair consists of an image and its corresponding class label. Besides the technical challenges associated with the acquisition of extremely large quantities of paired data, ensuring accurate alignment between image and text conditions is still an open research question in the community, as attested by the large amount of recent work in the domain~\cite{li2023blip2,zhai2023sigmoid}.
In practice, large web-scraped datasets, such as LAION-5B~\cite{schuhmann2022laion} or CC12M~\cite{Changpinyo_2021_CVPR}, contain abundant noisy pairs due to their collecting process.
To clean the pairs, hence ensuring alignment of higher quality, the prevailing strategy filters out samples that fail to meet an arbitrarily chosen criterion, often done through techniques like thresholding the CLIP-score~\cite{clip, pernias2023wuerstchen,podell2023sdxl}. 
This approach, however, has two main drawbacks: first, it is challenging to adjust the criterion accurately and more importantly, it discards many high-quality samples that could potentially enhance generation quality irrespective of the condition. For instance, out of the 50B initially-collected text-image pairs, only 10\% were left in LAION-5B~\cite{schuhmann2022laion}, thus discarding 90\% of the samples, i.e. 45B images. 

Instead of discarding the vast majority of training samples, in this work, we leverage them to learn simultaneously conditional and unconditional distributions. 
Specifically, we introduce a novel approach that estimates what we call the \emph{coherence score}, which evaluates how well the condition corresponds to its associated image. We incorporate this \emph{coherence score} into the training process by embedding it into a latent vector, which is subsequently merged with the condition. This additional information enables the diffusion model to determine the extent to which the condition should influence the generation of a target image. During inference, our method has the flexibility to take as input the coherence score, thereby allowing users to vary the impact of the condition on the generation process, as illustrated in Figure \ref{fig:teaser}. In addition, to further improve the generated image quality, we refine the Classifier-Free-Guidance method (CFG) introduced in~\cite{ho2022classifier} by leveraging the gap between high and low \emph{coherence scores}.

We evaluate our approach across three distinct tasks that involve various types of conditioning: text for text-to-image generation, labels for class-conditioned image generation, and semantic maps for paint-by-word image generation. In text conditioning, we use the CLIP score~\cite{clip} to estimate the coherence between the image and its accompanying caption. For class-conditional generation, we employ an off-the-shelf confidence estimator to gauge the coherence between the image and its label. Concerning semantic maps, we derive pixel-level coherence scores either by automatically generating them based on class boundaries or by using an off-the-shelf confidence estimator. Our evaluations span multiple datasets such as COCO~\cite{lin2014microsoft} for zero-shot text-to-image generation, ImageNet~\cite{5206848} for class-conditioned generation, and ADE-20K~\cite{zhou2017scene} for semantic maps. Our results show that including the coherence score in the training process allows training diffusion models with better image quality and that are more coherent with what they are prompted.

In summary, our contributions can be outlined as follows:
\begin{itemize}[left=0.4em]
\item[\checkmark] \textbf{Innovative Training Approach}: We present coherence-aware diffusion (CAD), a novel method for training conditional diffusion models in the presence of annotation imperfections. By incorporating a coherence score between the target image and its associated condition, our model can adapt and fine-tune the influence of the condition on the generation process.
\item[\checkmark] \textbf{Flexible Inference Scheme}: We introduce a versatile inference scheme, in which manual tuning of the coherence score during the generation enables the modulation of the condition's impact on image generation. Additionally, we refine the classifier-free guidance method under this new inference scheme, resulting in enhanced image quality.
\item[\checkmark] \textbf{Wide Applicability}: Demonstrating the versatility of CAD, we evaluate it across three diverse tasks involving different types of conditions (text, class labels, and semantic maps). CAD produces visually pleasing results across all tasks, emphasizing its generic applicability.
\end{itemize}

\section{Related Work}
\noindent \textbf{Conditional generation.}
Previous attempts to condition generative models were focused on GANs~\cite{GoodFellow2014}. Class-conditional~\cite{mirza2014conditional} was the first way to introduce conditioning in generative models.
This is a simple global conditioning. However, it lacks control over the output. 
To increase the control power of conditioning, more local conditionings were proposed such as drawings, maps, or segmentation maps~\cite{isola2017image}. Segmentation maps conditioning~\cite{park2019semantic, Zhu_2020_CVPR, dufour2022scam} propose the most control to the user. Indeed, the user can not only specify the shape of the objects but also
per-object class information. Semantic masks are however tedious to draw, which impacts usability. Text-conditioned models~\cite{zhang2017stackgan} offer a compromise. They can provide both global and local conditioning and are easy to work with. Recently, diffusion models %
have made great advances in this domain.

\noindent \textbf{Diffusion models.}
Diffusion models~\cite{sohl2015deep, song2019generative, song2020improved, ho2020denoising} have recently attracted the attention of research in image generation. Compared to GANs, they have better coverage over the data distribution,  are easier to train and outperform them in terms of image quality~\cite{dhariwal2021diffusion}. Architecture-wise, diffusion models rely mostly on modified versions of a U-Net~\cite{ho2020denoising, song2019generative, dhariwal2021diffusion}. Recent works have however shown that other architectures are possible~\cite{peebles2022scalable, hoogeboom2023simple}. In particular, RIN~\cite{jabri2022scalable} proposes a much simpler architecture than the U-Net achieving more efficient training.
They recently have been a lot of works~\cite{ramesh2022hierarchical, saharia2022photorealistic, balaji2022ediffi} scaling up these models on huge text-to-image datasets~\cite{schuhmann2022laion}. Stable Diffusion~\cite{rombach2022high}, Stable Diffusion XL~\cite{podell2023sdxl}, Paella~\cite{rampas2022fast} or Wuerstchen~\cite{pernias2023wuerstchen} have provided open-source weights for their networks, which has allowed an explosion in image generation. ControlNet~\cite{zhang2023adding} has shown that fine-tuning these models allows for very fine-grained control over the output with lots of different conditioning modalities. Recently, consistency models~\cite{song2023consistency, luo2023latent} have shown that by training more with a different loss, inference can be done in small amounts of steps (2-4 steps). All these text-to-image networks have been tuned on very noisy web-scrapped data. We argue in this paper that this noise causes limitations in the training. Concurrent works~\cite{BetkerImprovingIG, chen2023pixart, segalis2023picture,gu2023matryoshka} propose to tackle this task through re-captioning, but this requires lots of resources to train a good captioner that outputs detailed captions without hallucinating details. As shown by~\cite{BetkerImprovingIG}, it also requires bridging the gap between train and test time prompts. Instead, our approach is much simpler in current training setups.

\noindent \textbf{Learning with noisy conditioning}  has been widely explored when considering classification. For binary classification, \cite{natarajan2013learning} study machine learning robustness when confronted with noisy labels, while~\cite{ishida2018binary} train a DNN with exclusively positive labels accompanied by confidence scores. To bring a more practical perspective, \cite{berthon2021confidence} introduced instance-dependent noise scored by confidence, where this score aligns with the probability of the assigned label's accuracy. The negative impact of noisy labels has been mitigated with changes in architecture~\cite{Goldberger2016TrainingDN, cheng2020weakly}, in the loss~\cite{ren2018learning} or filtering the noisy samples~\cite{han2018co}. More recently, \cite{kang2022noise} propose a similar approach to ours by conditioning an image captioner model by the CLIP-score to mitigate the impact of text misalignment. Instead, we focus on image synthesis, where we condition the diffusion model with a coherence score.

\section{Coherence-Aware Diffusion (CAD)}
\label{sub:method}

In this work, we want to improve the training of the diffusion model in the presence of misaligned conditioning. We make the assumption that for each training sample, a coherence score measures how coherent the conditioning is with respect to the data. We propose to condition the diffusion model on this coherence score in addition to the original condition. By doing so, the model learns to discard the low-coherence conditions and focus on the high-coherence ones. Consequently, our model can behave as either a conditional or an unconditional model. Low-coherence samples, lead to unconditional sampling, while high-coherence samples lead to conditional samples. Building on this, we redesign classifier-free guidance to rely on coherence conditioning instead of dropping out the conditioning randomly.
\begin{figure}
\begin{center}
    \begin{minipage}{\linewidth}
        \begin{minipage}[c]{0.24\linewidth}
            \centering
            Baseline
        \end{minipage}
        \begin{minipage}[c]{0.24\linewidth}
            \centering
            Filtered
        \end{minipage}
        \begin{minipage}[c]{0.24\linewidth}
            \centering
            Weighted
        \end{minipage}
        \begin{minipage}[c]{0.24\linewidth}
            \centering
            CAD (Ours)
        \end{minipage}
        \hfill
        \begin{minipage}[c]{0.24\linewidth}
            \siximagesquare{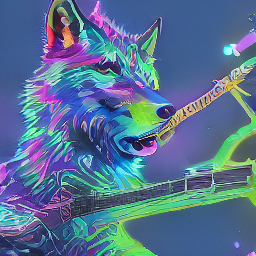}{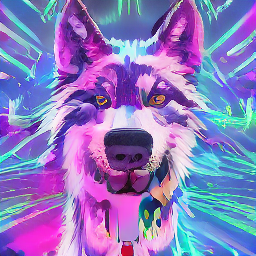}{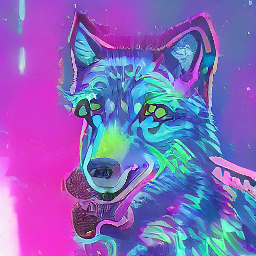}{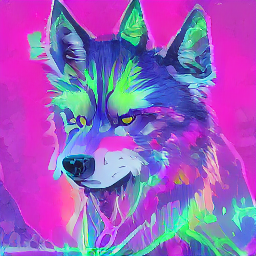}{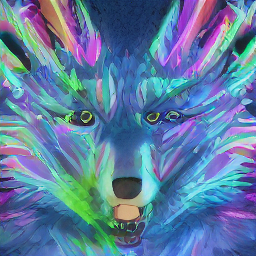}{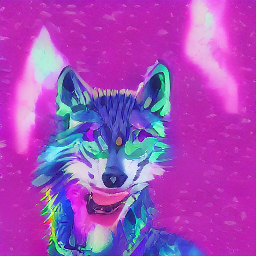}
        \end{minipage}
        \hfill
        \begin{minipage}[c]{0.24\linewidth}
            \siximagesquare{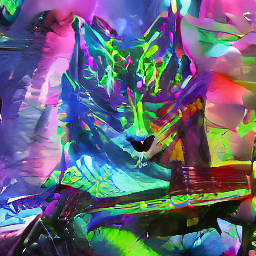}{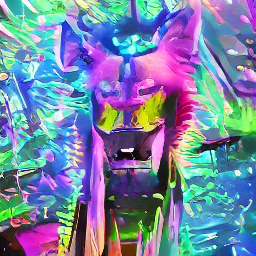}{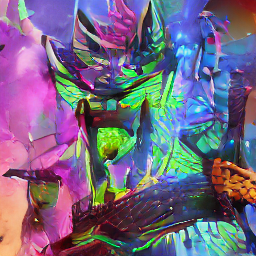}{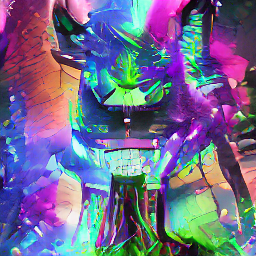}{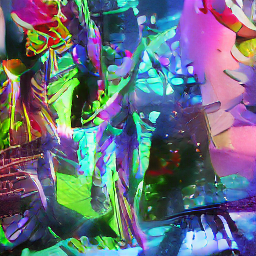}{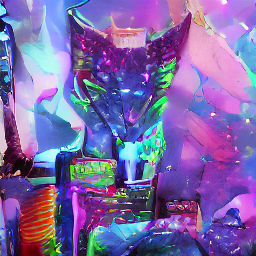}
        \end{minipage}
        \hfill
        \begin{minipage}[c]{0.24\linewidth}
            \siximagesquare{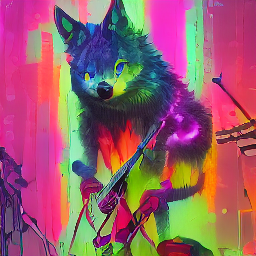}{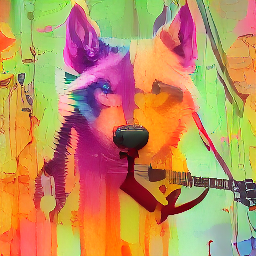}{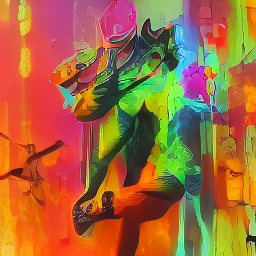}{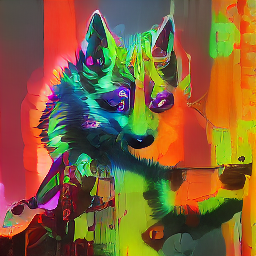}{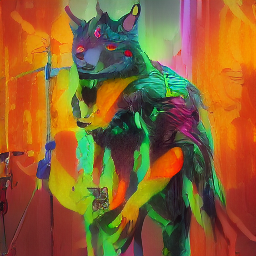}{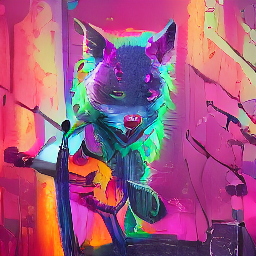}
        \end{minipage}
        \hfill
        \begin{minipage}[c]{0.24\linewidth}
            \siximagesquare{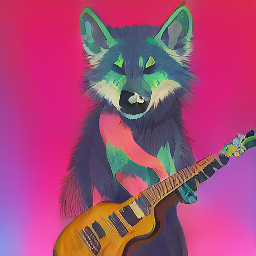}{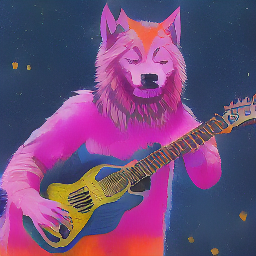}{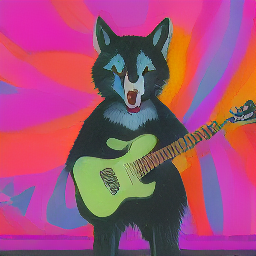}{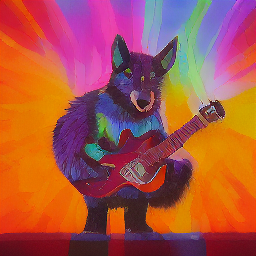}{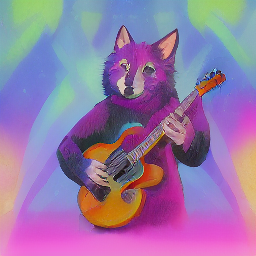}{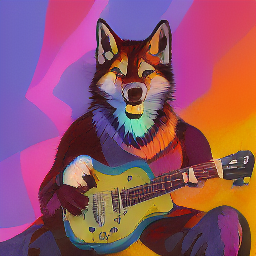}
            \vspace{1mm}
        \end{minipage}
        
    \end{minipage}
    \vspace{1mm}
    \small{\textit{A vibrant, multicolored furry \textcolor{vyellow}{\textbf{wolf}} with neon highlights playing an electric \textcolor{vyellow}{\textbf{guitar}} on stage; trending on artstation}}
    
    \vspace{0.2em}
    \begin{minipage}{\linewidth}
        \begin{minipage}[c]{0.24\linewidth}
            \siximagesquare{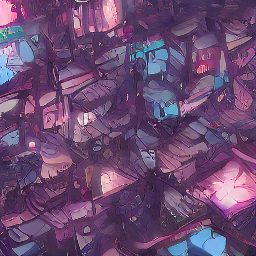}{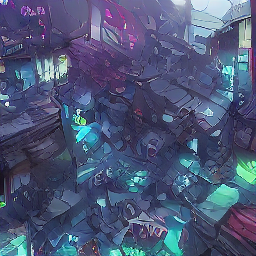}{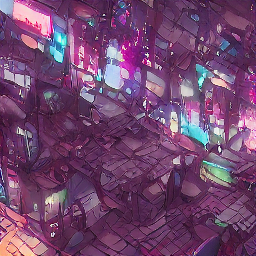}{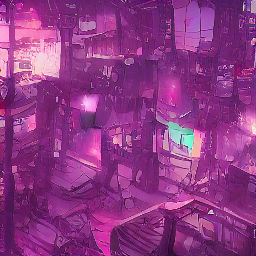}{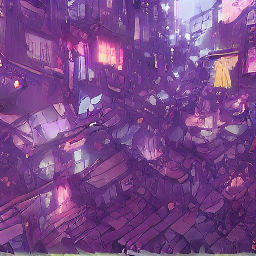}{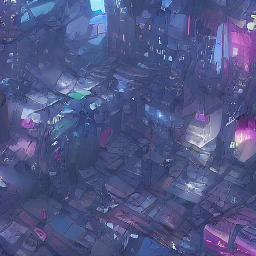}
        \end{minipage}
        \hfill
        \begin{minipage}[c]{0.24\linewidth}
            \siximagesquare{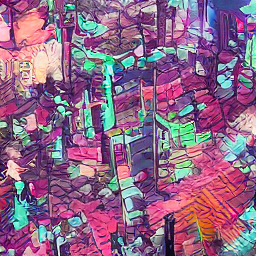}{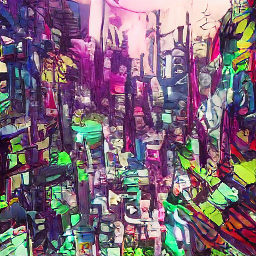}{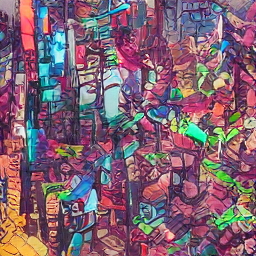}{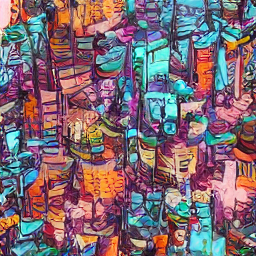}{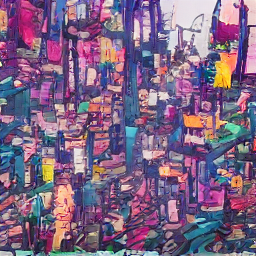}{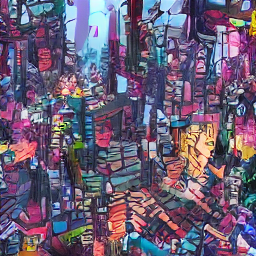}
        \end{minipage}
        \hfill
        \begin{minipage}[c]{0.24\linewidth}
            \siximagesquare{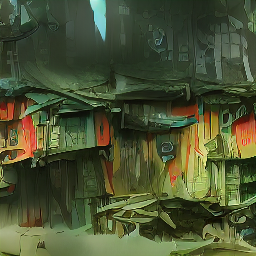}{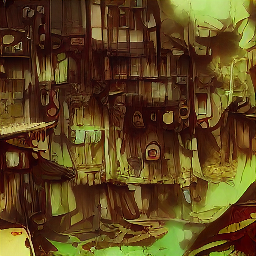}{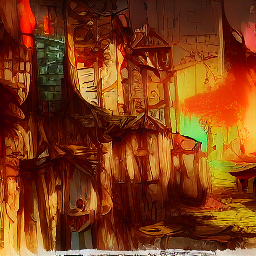}{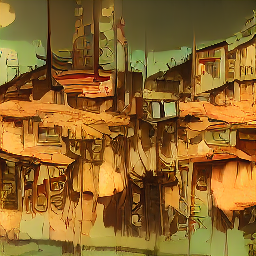}{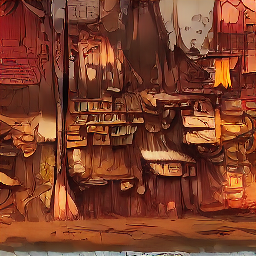}{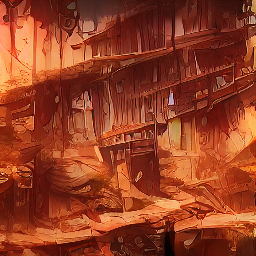}
        \end{minipage}
        \hfill
        \begin{minipage}[c]{0.24\linewidth}
            \siximagesquare{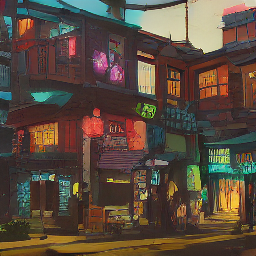}{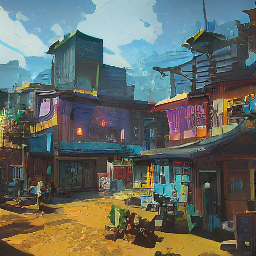}{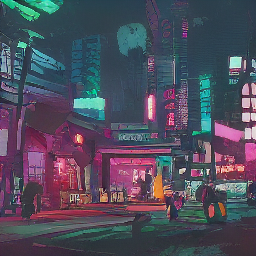}{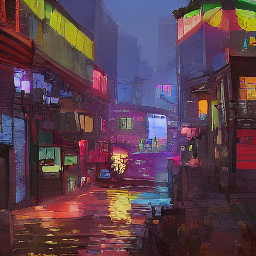}{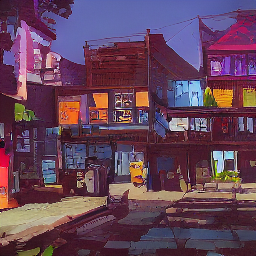}{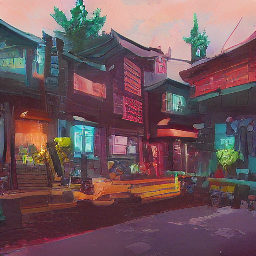}
            \vspace{1mm}
        \end{minipage}
    \end{minipage}
    \vspace{1mm}
    \small{\textit{A \textcolor{vyellow}{\textbf{shanty}} version of \textcolor{vyellow}{\textbf{Tokyo}}, new rustic style, bold colors with all colors palette, video game, genshin, tribe, \textcolor{vyellow}{\textbf{fantasy}}, overwatch}}
    \captionof{figure}{Images generated with a TextRIN trained with different handling of the misalignment between the image and its associated text at training. Compared to doing nothing (baseline), removing misaligned samples (filtering) or weighting the loss (weighted), our Coherence-Aware Diffusion training (CAD) generates more visually pleasing images while better adhering to the prompt.}
    \label{fig:qualitative}

\end{center}
\vspace{-2.5em}
\end{figure}

\subsection{Conditional Diffusion Models }

We first provide an overview of conditional diffusion models. These models learn to denoise a target at various noise levels. By denoising sufficiently strong noises, we eventually denoise pure noise, which can then be used to generate images. Each diffusion process is associated with a network $\epsilon_{\theta}$, which performs the denoising task. To train such a network, we have $X$ an image and $y$ its associated conditioning coming from $p_\text{data}$ the data distribution. We use a noise scheduler $\gamma(t)$, which defines $X_t$ which is the input image corrupted with Gaussian noise at the $t$-th step of diffusion. such as $X_t = \sqrt{\gamma(t)}X + \sqrt{1-\gamma(t)}\epsilon$, where $\epsilon \sim \mathcal{N}(0,1)$ the noise we want to predict and $t \in [0,1]$ the diffusion timestep. During training, conditioning is provided to $\epsilon_{\theta}$.
The objective of the diffusion model is to minimize the following loss:
\begin{equation}
    L_\text{simple} = \mathbb{E}_{(X,y) \sim p_\text{data}, t \sim \mathcal{U}[0,1]}[\left\lVert \epsilon - \epsilon_{\theta}(X_t, y, t)\right\rVert] \quad ,
\end{equation}
where $\left\lVert \cdot \right\rVert$ denotes the $L_2$ norm.

One observation is that the conditioning is implicitly learned by the diffusion model, as the diffusion loss is only enforced on the image and not on the conditioning itself.
This motivates our hypothesis that removing data with low label coherence can harm the training of the diffusion model. Even if the conditioning is not well aligned, the image still belongs to the distribution that we aim to learn. By discarding such data, we weaken the distribution estimator.

\begin{figure}[b]
    \centering
    \includegraphics[width=\columnwidth]{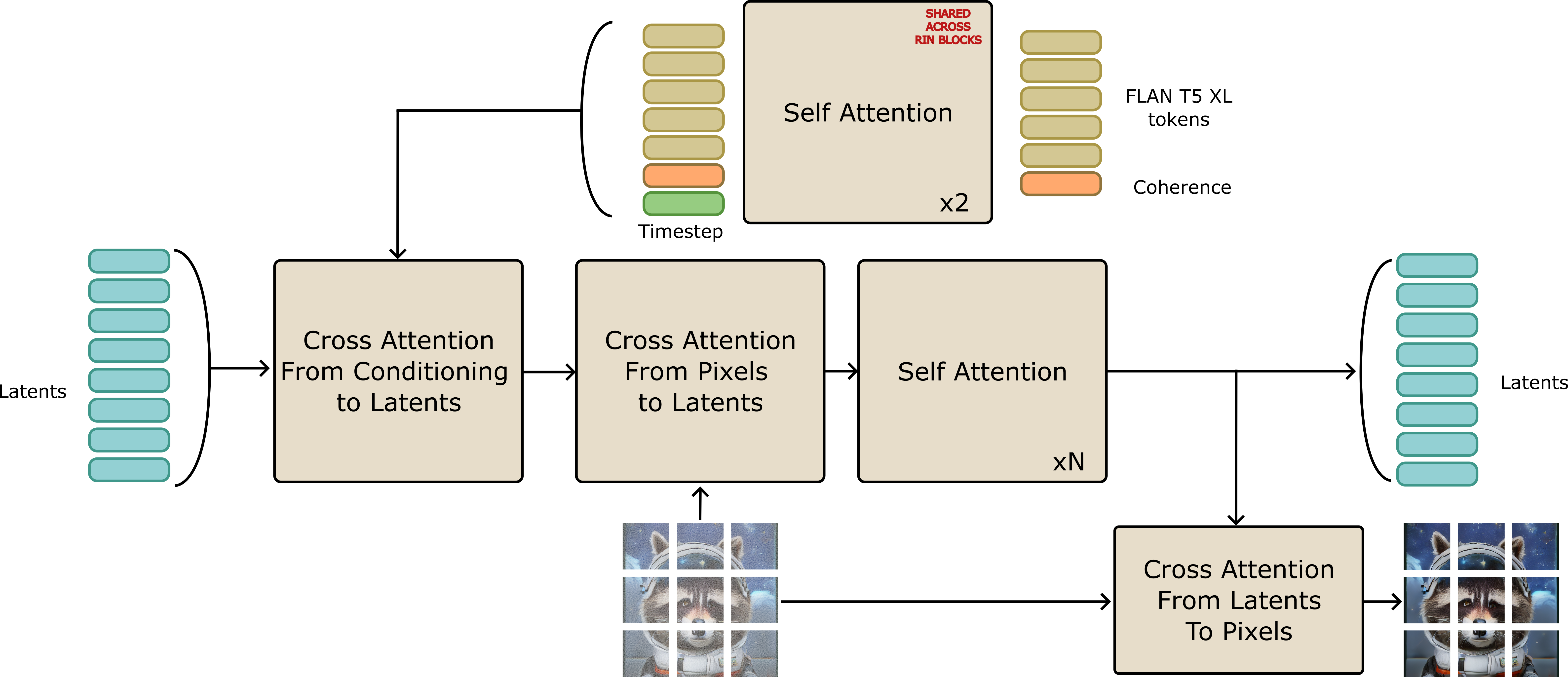}
    \caption{\textbf{Text RIN Block.} Architecture of the proposed Text RIN Block used in CAD. We include a cross attention from the text to the latent branch of the RIN block.}
    \label{fig:text_rin_block_arch}
\end{figure}
\begin{figure*}[ht]
    \centering
    \begin{minipage}{0.49\textwidth}
    \renewcommand{\arraystretch}{1.5}
    \resizebox{\textwidth}{!}{
        \begin{tabular}{cc | cccccc|}
         & & \multicolumn{6}{| c |}{\cellcolor{lavenderblue}{\textbf{COCO-10K}}}\\
         \hline
         \textbf{Method} & \textbf{$\omega$}  & \textbf{$\text{FID}_{\text{CLIP}}$ $\downarrow$} & \textbf{$\text{CLIPScore}$ $\uparrow$} & \textbf{$\text{P}_{\text{CLIP}}$ $\uparrow$} & \textbf{$\text{R}_{\text{CLIP}}$ $\uparrow$} & \textbf{$\text{D}_{\text{CLIP}}$ $\uparrow$} & \textbf{$\text{C}_{\text{CLIP}}$ $\uparrow$}\\
        \hline
         Baseline & 10 & 91.9 & 25.96 & \underline{0.281} & 0.047 & \underline{0.181} & 0.222\\
         Weighted & 5 & 98.3 & 25.15 & 0.192 & 0.046 & 0.111 & 0.155 \\
         Filtered & 10 & \underline{85.8} & \textbf{26.52} & \underline{0.281} & \underline{0.061} & 0.175 & \underline{0.233} \\
         CAD-S (Ours) & 15 & \textbf{69.4} & \underline{26.16} & \textbf{0.373} & \textbf{0.078} & \textbf{0.265} & \textbf{0.315} \\ \hline
        CAD-B (Ours) & 15 & 55.3 & 27.04 & 0.462 & 0.172 & 0.349 & 0.428\\  
        CAD-B 512px (Ours) & 15 & 54.3 & 25.52 & 0.593 & 0.148 & 0.545 & 0.472 \\
        \end{tabular}
        }
    \captionof*{table}{(a)}
    \label{tab:coco_quant}
    \end{minipage}
    \hfill
    \begin{minipage}{0.23\textwidth}
  \begin{center}
    \resizebox{\textwidth}{!}{
    \includegraphics[width=\linewidth]{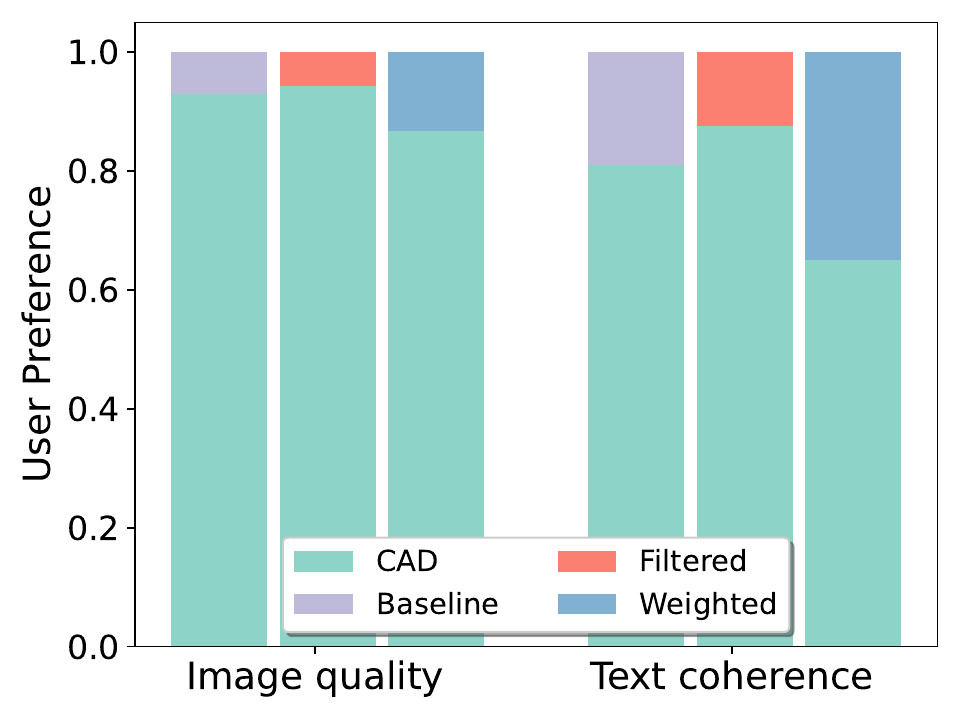}
    }
    \captionof*{figure}{(b)}
    \label{fig:user_study}
    \end{center}
    \end{minipage}
    \hfill
    \begin{minipage}{0.22\textwidth} 
        \begin{center}
        \includegraphics[width=\linewidth]{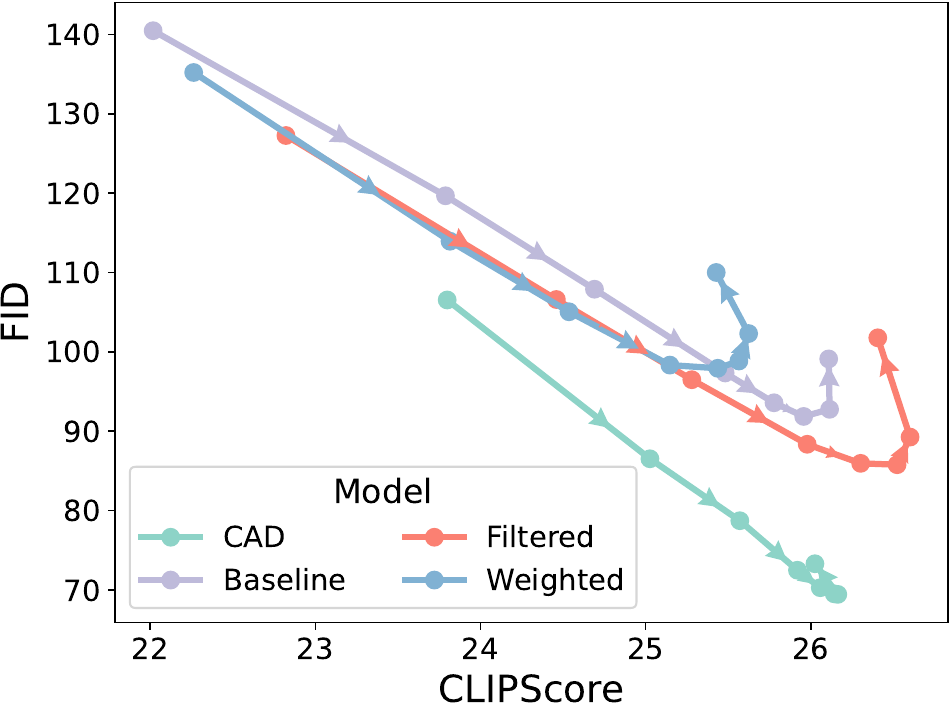}
        \captionof*{figure}{(c)}
        \label{fig:fid_vs_clip}
        \end{center}
    \end{minipage}
    \caption{\textbf{Text-to-image generation results.} (a) Quantitative results for text-to-image generation. We show that CAD achieves significantly lower FID, precision, recall, density and coverage while keeping similar CLIP score. (b) User study results. Users had to indicate the highest quality image and the most adhering to the prompt among pairs of images corresponding to our CAD method and one of baseline, filtered or weighted method. (c) FID versus CLIP on the text-to-image task for varying degrees of guidance $\omega$. We show that CAD achieves a significantly better trade-off with a much lower FID for the same CLIP score.}
    \label{fig:text-to-image}
\end{figure*}

\subsection{Integrating label information into the diffusion model}\label{CAD_method}
We assume that for every datapoint $(X, y)$ we have an associated $c$, the coherence score of $y$ where $c\in[0,1]$. Our goal is to incorporate label coherence into the diffusion model to discard only the conditioning that contains low levels of coherence while continuing to train on the image. A value of $c{=}1$ indicates that $y$ is the best possible annotation for $X$, while $c{=}0$ suggests that $y$ is a poor annotation for $X$.

To achieve this, we modify the conditioning of the diffusion model $\epsilon_{\theta}$ to include both $y$ and $c$, using the following loss:
\begin{equation}
L_\text{simple} = \mathbb{E}_{(X,y,c) \sim p_\text{data}, t \sim \mathcal{U}[0,1]}[\left\lVert \epsilon - \epsilon_{\theta}(X_t, y, c, t)\right\rVert] \quad .
\end{equation}
We refer to this kind of models as coherence coherence-aware diffusion (CAD) models. By informing the diffusion model of the coherence score associated with samples, we avoid filtering out low-confident samples and let the model learn by itself what information to take into account. Avoiding the filtering allows us to still learn $X$ even in the presence of noisy labels.

\subsection{Test-time prompting}
After training a model with different levels of coherence, we can thus prompt it with varying degrees of coherence. 
When we prompt with minimal coherence, we obtain an unconditional model. 
On the other hand, when we prompt with maximal coherence, we get a model that is very confident about the provided label. 
However, like any other conditional diffusion model relying on attention, there is no guarantee that the label is actually used.

To strengthen the use of the label, we propose a modification to the Classifier Free Guidance (CFG) method~\cite{ho2022classifier} that leverages the coherence. CFG uses both a conditional and unconditional model to improve the quality of generated samples. To learn such models, a conditional diffusion model is used and the conditioning is dropped out for a portion of the training samples. The original CFG formulation is as follows:
\begin{equation}
\hat{\epsilon}_{\theta}(x_t, y) = \epsilon_{\theta}(x_t, y) +\omega(\epsilon_{\theta}(x_t, y) - \epsilon_{\theta}(x_t, \emptyset)) \quad ,
\end{equation}
with $\omega$ the guidance rate. Instead, we propose a coherence-aware version of CFG (CA-CFG):
\begin{equation}
\hat{\epsilon}_{\theta}(x_t, y) = \epsilon_{\theta}(x_t, y, 1) +\omega(\epsilon_{\theta}(x_t, y, 1) - \epsilon_{\theta}(x_t, y, 0)) \quad .
\end{equation}
This modification removes the need to dropout the conditioning. Instead, we directly use the noise in the conditioning to drive the guidance.

\section{Experiments}
In this section, we will analyze 3 tasks: text, class, and semantically conditioned image generation. We describe the experimental setup, and analyze quantitative and qualitative results to better understand the inner workings of Coherence-aware diffusion.

\subsection{Experimental setup and Metrics}
\paragraph{Experimental setup.}
\emph{For text-conditional image generation}, we use a modified version of RIN~\cite{jabri2022scalable} (See Figure~\ref{fig:text_rin_block_arch}). To map the text to an embedding space, we use a frozen FLAN-T5 XL~\cite{flant5}. We then map the embedding with 2 self-attention transformer layers initialized with LayerScale~\cite{touvron2021going} and 16 registers. The text tokens are mapped with the coherence score using these layers, which is the same for every Text RIN block. Unlike the class conditional RIN block, the latent branch contains only the latents without concatenated tokens. Instead, the mapped text tokens, coherence, and timestep embeddings provide information to the latent branch with a cross-attention layer at the beginning of each Text RIN block. The rest of the architecture follows the standard RIN block design. The CAD-S model has 188M parameters and we train it for 240K steps. We train these models on a mix of datasets composed of CC12M~\cite{changpinyo2021cc12m} and LAION Aesthetics 6+~\cite{schuhmann2022laion}. We also train a 332M parameters version for 500k steps that we name CAD-B. We also finetune it for 200K steps at 512px resolution. To estimate the coherence score, we use MetaCLIP H/14~\cite{xu2023demystifying} that we then bin into 8 equally distributed discrete bins. We then use the normalized index between 0 and 1 as the coherence score. We compare our method to 3 baselines: "Baseline" is a model where we just train without coherence, "Filtered" corresponds to a model where we discard the 3 less coherent bins, and "Weighted" corresponds to a model where we weight the loss of the model by the normalized coherence of the sample. When using Coherent Aware prompting, we sample a random set of characters that we use with coherence-score of zero as the negative prompt.

\noindent\emph{For the class-conditional image generation experiments}, we 
rely on RIN~\cite{jabri2022scalable} and use the same hyperparameters as the authors. We experiment on conditional image generation for CIFAR-10~\cite{Krizhevsky2009LearningML} and Imagenet-64~\cite{5206848} for which we artificially noise the label. We extract the coherence score from pre-trained classifiers in the following way: We re-sample with some temperature $\beta$ a new label from the label distribution predicted by the classifier. We then consider the entropy of the distribution as the coherence score. After, we use a sinusoidal positional embedding~\cite{vaswani2017attention} that we map with an MLP. We add this coherence token in the latent branch of RIN, similar to the class token.
\\
\noindent\emph{For semantic segmentation conditioned experiments}, we use ControlNet~\cite{zhang2023adding} to condition a pre-trained text-to-image Stable-Diffusion~\cite{rombach2022high} model with both semantic and coherence maps concatenated. The training and evaluation of our method are performed on the ADE20K dataset~\cite{zhou2017scene}, a large-scale semantic segmentation dataset containing over 20,000 images with fine-detailed labels, covering diverse scenes and classes. Since captions are not available for this dataset, we use BLIP2~\cite{li2023blip} to generate captions for each image in the dataset similarly to~\cite{zhang2023adding}. We use a pre-trained Maskformer~\cite{cheng2021maskformer} on the COCO Dataset~\cite{lin2014microsoft}, to extract the segmentation map and its associated confidence (MCP~\cite{hendrycks2017a}) for each image in the ADE20k dataset\footnote{It is worth noting that using a Maskformer trained on the same dataset would result in high confidence map everywhere due to its high performance on the training set~\cite{guo2017calibration}}. We use confidence as our coherence score. More details about the experimental setup are available in the supplementary.

\begin{figure*}[t]
    \centering
    \begin{minipage}[t]{0.24\textwidth}
        \centering
        \subcaption*{$\omega = 0.0$}
        \includegraphics[width=\textwidth]{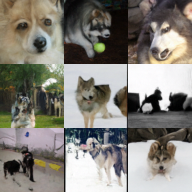}
        
    \end{minipage}
    \hfill
    \begin{minipage}[t]{0.24\textwidth}
        \centering
        \subcaption*{$\omega = 1.0$}
        \includegraphics[width=\textwidth]{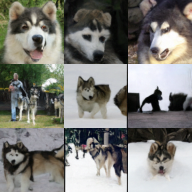}
    \end{minipage}
    \hfill
    \begin{minipage}[t]{0.24\textwidth}
        \centering
        \subcaption*{$\omega = 5.0$}
        \includegraphics[width=\textwidth]{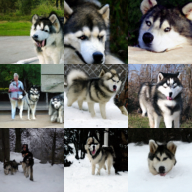}
    \end{minipage}
    \hfill
    \begin{minipage}[t]{0.24\textwidth}
        \centering
        \subcaption*{$\omega = 20.0$}
        \includegraphics[width=\textwidth]{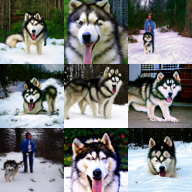}
    \end{minipage}
    \\
    \vspace{0.1cm}
    \begin{minipage}[t]{0.24\textwidth}
        \centering
        \includegraphics[width=\textwidth]{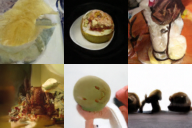}
    \end{minipage}
    \hfill
    \begin{minipage}[t]{0.24\textwidth}
        \centering
        \includegraphics[width=\textwidth]{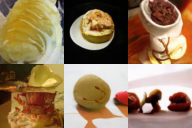}
    \end{minipage}
    \hfill
    \begin{minipage}[t]{0.24\textwidth}
        \centering
        \includegraphics[width=\textwidth]{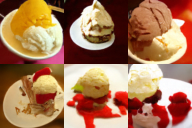}
    \end{minipage}
    \hfill
    \begin{minipage}[t]{0.24\textwidth}
        \centering
        \includegraphics[width=\textwidth]{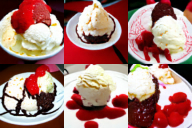}
    \end{minipage}
    \\
    \caption{Coherence-Aware Classifier-free Guidance for classes Malamute and Ice Cream with guidance rates $\omega \in \{0, 1, 5, 20\}$.}
    \label{fig:cacfg_images}
\end{figure*}
\paragraph{Metrics.} To evaluate image generation for all condition types, we use the Frechet Inception Distance~\cite{heusel2017gans} (FID) that evaluates image quality. We also use Precision~\cite{kynkaanniemi2019improved} (P), Recall~\cite{kynkaanniemi2019improved} (R), Density~\cite{naeem2020reliable} (D) and Coverage~\cite{naeem2020reliable}  (C) as manifold metrics, allowing us to evaluate how well the manifold of the generated images overlaps with the manifold of the real images.
For text-conditional, we also compute the CLIP Score~\cite{clip} and evaluate metrics on CLIP features on a 10K samples subset of COCO~\cite{lin2014microsoft} in a zero-shot setting.
For class-conditional, we compute the Inception Score~\cite{salimans2016improved} (IS). We also add the Accuracy (Acc) metric aiming at evaluating how well the image generator takes into account the conditioning and defined as $Acc(g) = \mathbb{E}_{c\in \text{Cat}(N)}[\mathbb{1}_{f(g(c)) = c}]$, where $g(.)$ is the generator we want to evaluate, $f(.)$ is a classifier, and $\text{Cat}(N)$ is the categorical distribution of $N$ labels. For CIFAR-10, we use a Vision Transformer~\cite{dosovitskiy2020image} trained on CIFAR-10, and for ImageNet, we use a DeiT~\cite{touvron2021training}.
For segmentation, instead of Accuracy, we compute the mean Intersection over Union (mIoU) instead of the Accuracy. 

\begin{figure*}

\begin{minipage}[b]{0.37\textwidth}
    \centering
     \includegraphics[width=\linewidth]{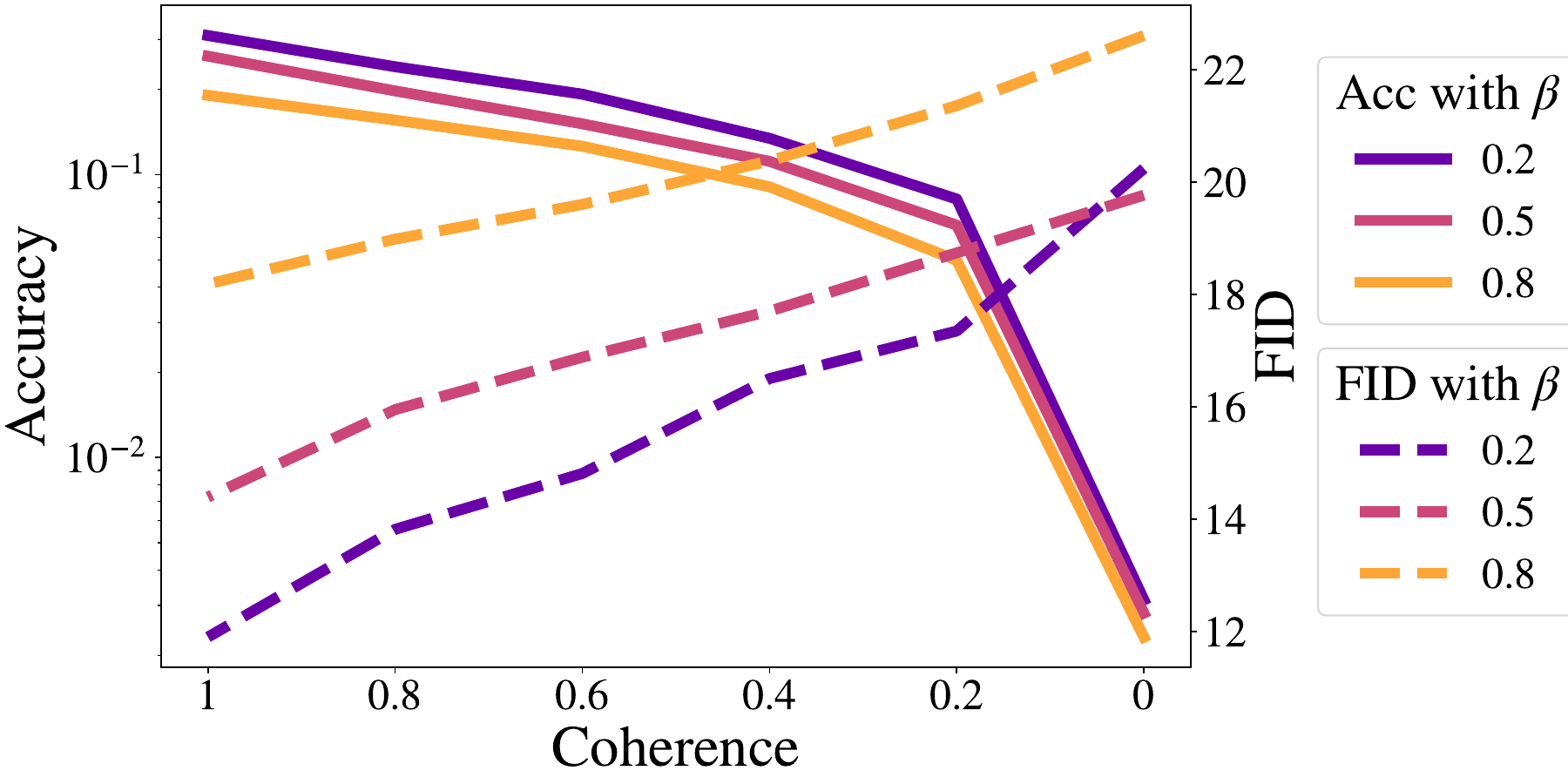}
    \captionof*{figure}{(a)}
\end{minipage}
\hfill
\begin{minipage}[b]{0.24\textwidth}
    \centering
     \includegraphics[width=\linewidth]{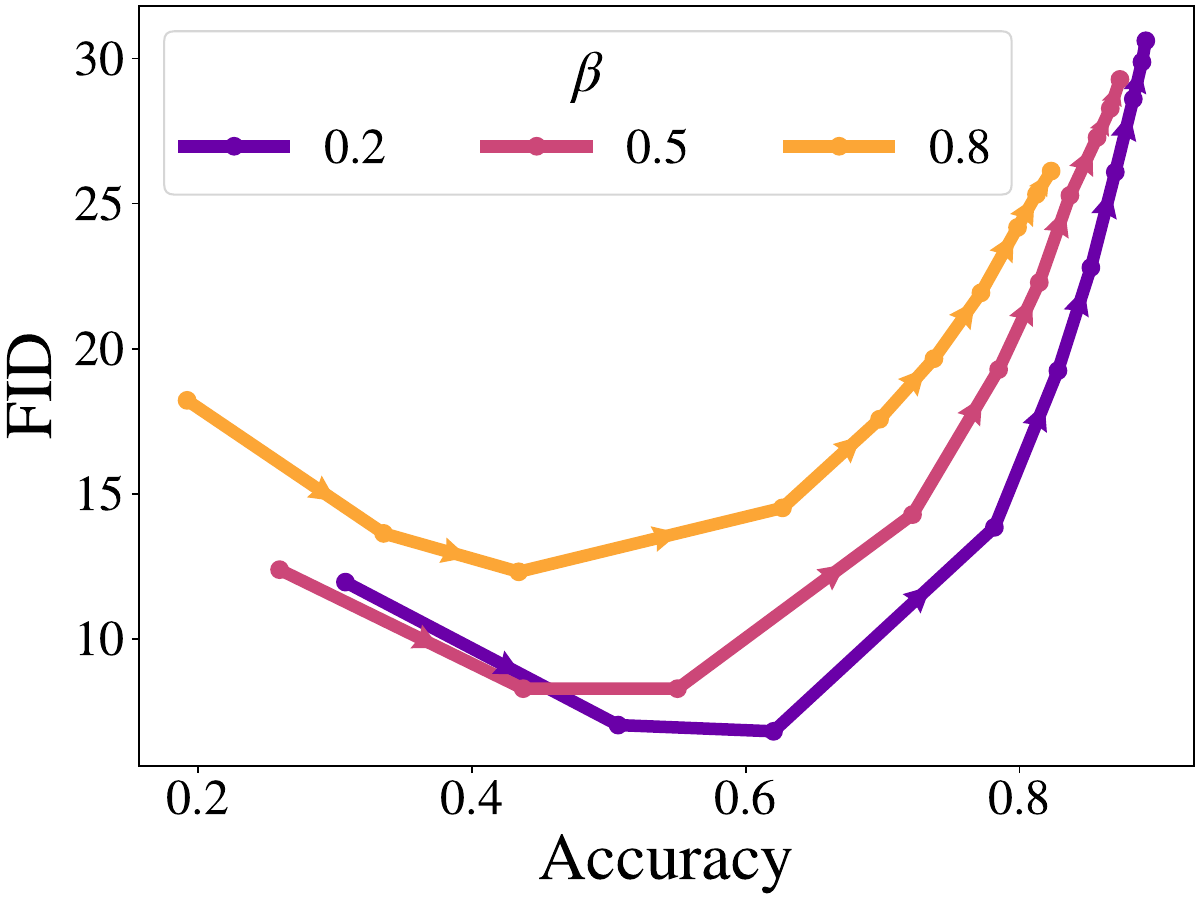}
     \captionof*{figure}{(b)}
\end{minipage}
\hfill
\begin{minipage}[b]{0.31\textwidth}
    \centering
     \includegraphics[width=\linewidth]{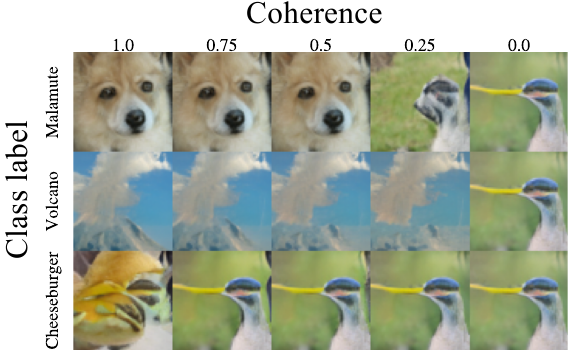}
    \captionof*{figure}{(c)}
\end{minipage}
\\
\begin{minipage}[b]{0.71\textwidth}
\centering
{
\renewcommand{\arraystretch}{1.4}
\resizebox{\textwidth}{!}{
\begin{tabular}{cc | ccccccc | ccccccc|}
 & & \multicolumn{7}{| c |}{\cellcolor{lavenderblue}{\textbf{ImageNet}}} & \multicolumn{7}{| c |}{\cellcolor{lavenderblue}{\textbf{CIFAR-10}}} \\
 \hline
 \textbf{$\beta$} & \textbf{Method} & \textbf{FID}$\downarrow$ & \textbf{IS}$\uparrow$ & \textbf{Acc}$\uparrow$ & \textbf{P}$\uparrow$ & \textbf{R}$\uparrow$ & \textbf{D}$\uparrow$ & \textbf{C}$\uparrow$ & \textbf{FID}$\downarrow$ & \textbf{IS}$\uparrow$ & \textbf{Acc}$\uparrow$ & \textbf{P}$\uparrow$ & \textbf{R}$\uparrow$ & \textbf{D}$\uparrow$ & \textbf{C}$\uparrow$ \\
\hline
 & Conditional & 7.56 & 34.26 & 0.475 & 0.595 & 0.610 & 0.768 & 0.706 & 8.97 & 10.13 & 0.954 & 0.630 & 0.573 & 0.817 & 0.758\\
\hline
\multirow{3}{*}{0.5} & Baseline & 14.38 & 20.46 & 0.168 & 0.539 & 0.579 & 0.595 & 0.505 & \underline{5.66} & \textbf{9.79} & 0.507 & 0.659 & \textbf{0.614} & 0.940 & \underline{0.809} \\
 & Filtered & \underline{10.20} & \textbf{26.59} & \textbf{0.338} & \textbf{0.573} & \underline{0.608} & \underline{0.707} & \textbf{0.645} & 9.48 & 9.53 & \underline{0.634} & \underline{0.679} & 0.547 & \underline{1.021} & 0.789\\
 & CAD & \textbf{9.11} & \underline{25.97} & \underline{0.327} & \underline{0.571} & \textbf{0.610} & \textbf{0.714} & \underline{0.633} & \textbf{4.75} & \underline{9.69} & \textbf{0.906} & \textbf{0.688} & \underline{0.588} & \textbf{1.059} & \textbf{0.821}\\
\end{tabular}
}
    }
    \captionof*{table}{(d)}
\end{minipage}
\hfill
\begin{minipage}[b]{0.26\textwidth}
    \centering
     \includegraphics[width=0.97\linewidth]{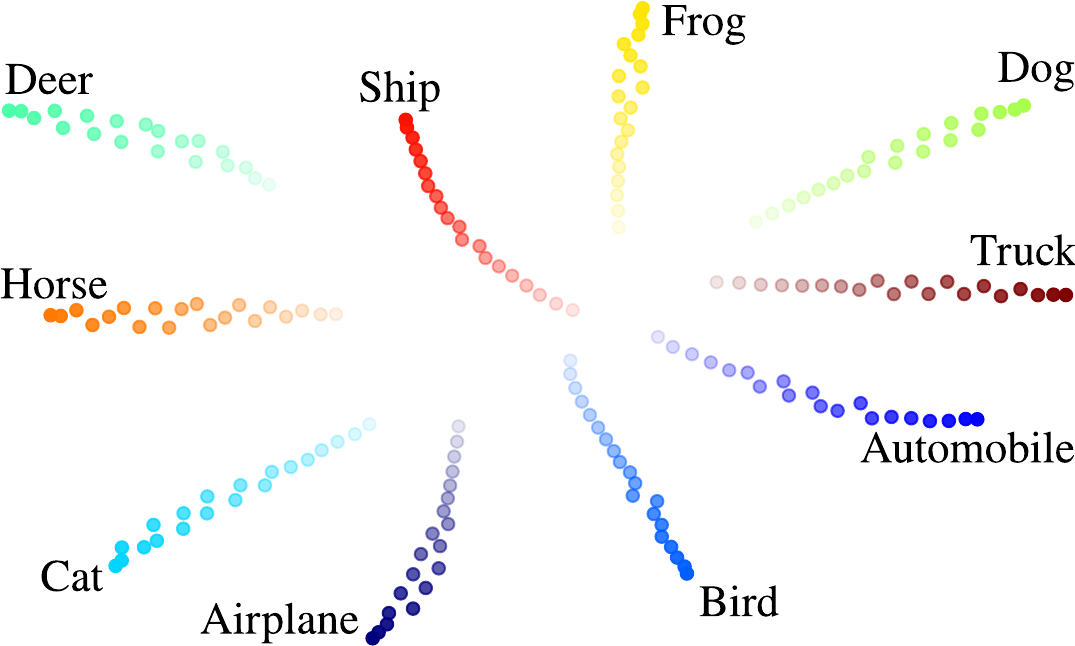}
    \captionof*{figure}{(e)}
\end{minipage}
\caption{\textbf{Impact of coherence on the model.} Top: (a) Impact on FID and Accuracy of prompting a model (CAD on ImageNet with $\beta=0.5$). (b) FID vs Accuracy (using CA-CFG). We vary the guidance rate from 0 to 25. (c) Impact of prompting with different coherence scores on image generation. Low coherence indicates convergence towards an unconditional model. Bottom: (d) Quantitative results for class-conditional image generation. Our coherence aware diffusion (CAD) is compared to a baseline model and a training set filtering strategy for different levels of label noise $\beta$. We show that CAD achieves higher fidelity and better accuracy. (e) TSNE of a mixed embedding of the class label and the coherence score on CIFAR-10. Each color denotes a class and the transparency shows the coherence level: the more transparent, the less coherence. }
\label{tab:class_coherence}
\end{figure*}

\subsection{Analysis}

\paragraph{Coherence conditioning.}
Here, we explore the behavior of our proposed coherence-aware diffusion model at test time. 
For the text conditional setting, we observe from Figure~\ref{fig:clip_vs_confidence} that the coherence and the quality of the generated image increase as the coherence increases. Indeed, FID decreases and the CLIPScore increases. 
In the class-conditional setup, we prompt a CAD model trained on resampled ImageNet and report results in Table~\ref{tab:class_coherence}(a). Similar to the previous setting, when the model has very high coherence, it achieves the best FID and accuracy. However, when the coherence score decreases, the accuracy decreases as well and drops to $\frac{1}{N}$ when the coherence goes to $0$. This validates our hypothesis that in the presence of low-coherence samples, our proposed model behaves like an unconditional model. Furthermore, even if the FID increases, it remains close to the FID of the conditional model, which implies that our CAD samples images are close to the training distribution.

Qualitatively, for text-to-image generation, we prompt the model with varying coherence scores from 0 to 1 and display results in Figure~\ref{fig:clip_vs_conf_viz}. We observe that when the coherence increases, the outputs are close to the prompt. For instance, in the bottom figure, the generated image displays an avocado armchair, where avocado and armchair are successfully mixed. Even a more complex prompt, like the raccoon at the top, follows closely the textual description. The raccoon does wear an astronaut suit and is looking through the window at a starry night. Similarly, as the coherence decreases, the images start to diverge from the original prompt. The avocado chair starts to first lose the "avocado" traits until there is only a chair and at the end an object that does not look like an avocado or a chair. At the top, we first lose the window, then the raccoon. We note that contrary to class conditional (as seen below), we do not converge to a totally random image. Instead, some features from the prompt are preserved, such as the racoon but the global structure of the image gets lost. This is highly linked to the CLIP network biases, which may pay less attention to less salient parts of an image such as the background, and are more sensitive to the main subject.

Similarly, for class-conditional, we prompt a CAD model trained on resampled ImageNet, with different coherences and classes. We sample with DDPM but we seed the sampling to have the same noise when sampling different classes. Table~\ref{tab:class_coherence} (c) illustrates the results, where we observe that when prompted with high coherence, CAD samples have the desired class. However, as the coherence decreases, the samples get converted into samples from random classes. Furthermore, samples that use the same sampling noise converge towards the same image in the low coherence regime. This shows that when the label coherence is low, CAD discards the conditioning and instead samples unconditionally. 

To better understand the underlying mechanism, we design the following experiment. We modify the proposed model so that instead of adding both a class and a coherence token to the RIN network, we merge them into a single token with an MLP. Figure~\ref{tab:class_coherence} (e) displays the t-SNE plots of the output of the MLP for every class in CIFAR-10 for which we compute different coherence scores. In the plot, high coherence translates to low transparency. We observe that as the coherence decreases, the embeddings of all classes tend to converge into the same embedding (center). This corroborates our hypothesis that the model uses the coherence to learn by itself how much to rely on the label.

\noindent \textbf{Coherence-aware classifier-free guidance.} 
Here, we examine the impact of the coherence-aware classifier guidance. For this, we first compute different guidance rates ranging from 0 to 30 with 250 steps of DDIM, and then we plot the FID vs the CLIPScore the classifier accuracy for different rates. Figure~\ref{tab:class_coherence} (b) illustrates this. We observe a trade-off between classifier accuracy and FID. Specifically, the more we increase the guidance, the more the accuracy increases, but at the cost of higher FID. 

Qualitatively, this behaviour is also present in Figure~\ref{fig:cacfg_images}: at a lower guidance rate, the images are more diverse but at the cost of lower accuracy with respect to the class. %
This pattern is best shown in the Malamute example when $\omega=20$ (first part in Figure~\ref{fig:cacfg_images}), where all malamutes have a similar pose with their tongues hanging and similarly, and in the third part where all ice-creams look similar, i.e., one white ice-cream scoop with red fillings.

Interestingly, we also observe that some guidance leads to optimal results. In Figure~\ref{tab:class_coherence}, when $\omega=1$, the FID is at its lowest point, and the accuracy is higher than the default model that has $\omega=0$. This is also shown in Figure~\ref{fig:cacfg_images}, where, when $\omega=1$, samples best combine diversity and fidelity.

\subsection{Results}

In this section, we report image generation results conditioned on text, class, and semantic maps.

\paragraph{Qualitative results for text-conditioned image generation.} 
In Figure~\ref{fig:qualitative}, we observe that coherence-aware diffusion for textual conditioning allows for better prompt adherence and better-looking images. In the first row, we observe that CAD is the only method that captures the details of the prompt, such as having the wolf play the guitar. Indeed, the baseline and filtered models output a wolf, but most generations display only a head. The weighted model performs slightly better but it lacks quality. Furthermore, our model displays higher diversity in the output styles. For instance, this is visible in the bottom row where our model displays a variety of street images whereas the other methods tend to have a collapsed output. For this prompt, our model also displays better image quality compared to other methods.

\begin{figure*}[h!]
    \begin{minipage}{0.7\textwidth}
        \centering
     \includegraphics[width=0.97\linewidth]{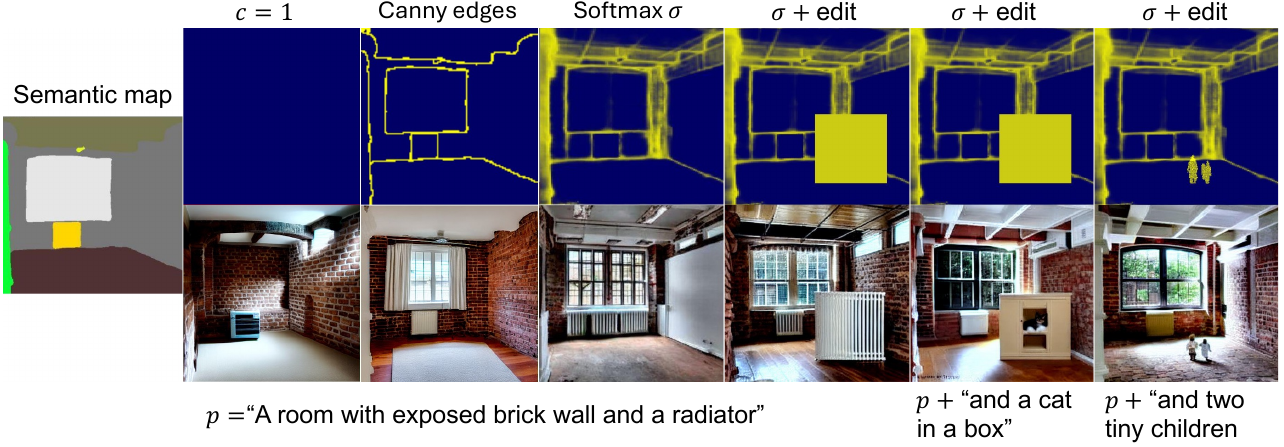}
    \caption*{\textbf{(a) Examples of image generation conditioned on a semantic map.}}
    \end{minipage}   
    \hfill
    \begin{minipage}{0.3\textwidth}
        \centering
        \resizebox{\textwidth}{!}{
    \begin{tabular}{c | cc | cc}
    \toprule 
    & \multicolumn{2}{c|}{\cellcolor{lavenderblue}{\textbf{ADE20k}}} & \multicolumn{2}{c}{\cellcolor{lavenderblue}{\textbf{COCO}}} \\ [1.5pt]
    Metric & Baseline & CAD & Baseline & CAD\\ [1.5pt]
    \hline 
    
    \textbf{FID}  & 33.67 & \textbf{30.88} & 20.1 & \textbf{18.1} \\[1.5pt]
    \textbf{mIoU} & 22.6 & \textbf{23.7} & 35.1 & \textbf{35.3} \\[1.5pt]
    \textbf{P}    & 0.785 & \textbf{0.844} & 0.7876 & \textbf{0.8404} \\[1.5pt]
    \textbf{R}    & 0.757 & \textbf{0.824} & 0.6760 & \textbf{0.8060} \\[1.5pt]
    \textbf{D}    & 1.029 & \textbf{1.0755} & \textbf{1.0811} & 1.0687 \\[1.5pt]
    \textbf{C}    & 0.904 & \textbf{0.934} & 0.8956 & \textbf{0.9304} \\[1.5pt]
    \bottomrule
\end{tabular}
        }
        \captionsetup{skip=10pt} %
        \caption*{\textbf{(b) Quantitative results on ADE20k and MS COCO}. We report FID, mIoU, Precision (P), Recall (R), Diversity (D) and Coverage (C).}
        
    \end{minipage}
\caption{\textbf{(a) Image generation conditioned on a semantic map.} The images generated for a given prompt $p$ (shown below) are shown with respect to different pixel-level coherence scores $c$ (shown above). Coherence scores are obtained either synthetically using Canny edge detection on the semantic map, or from the maximum of the softmax probability $\sigma$ of a pre-trained model. They can then be edited either manually ($\sigma$ + edit) or by blitting other coherence maps (rightmost).
    \textbf{(b) Quantitative results on ADE20k and MS COCO.}}
    \label{fig:semantic_seg}
\end{figure*}

\paragraph{Quantitative results for text-conditioned image generation.} In Figure~\ref{fig:text-to-image} (c), we compare our method to a classical diffusion model (baseline), to a method where all samples with CLIPScore lower than 0.41 are filtered as is commonly done with LAION-5B (Filtered), and one where the diffusion loss is wheighted by the CLIPScore of the sample (Weighted). We vary the coherence and plot the FID with respect to the CLIPScore. We observe that our method achieves significantly better FID/CLIP tradeoff than the other methods. In Figure \ref{fig:text-to-image} (a), we report the metrics for the guidance parameter that gives the best FID. We observe that our method outperforms other methods on all metrics except for the CLIPScore. In particular, our method achieves a better FID by 15 points than the second-best method, i.e. the filtered model. We corroborate these results with a user study in Figure~\ref{fig:text-to-image} (b), where we generate images for randomly sampled captions in COCO. For each method, we generate 10 pairs of images and we ask 36 users to vote for the image with the best quality, and for the one with the most coherence to the prompt. Our method is overwhelmingly preferred to other methods. In particular, users prefer the image quality of our images 95\% of the cases and find our images better aligned with the prompts by 89\%. Notably, the user study reveals a well-established limitation in such models, i.e. FID vs CLIPScore tradeoffs do not necessarily correlate well with human perception, as shown also in SD-XL~\cite{podell2023sdxl}.

\paragraph{Quantitative results for class-conditional image generation.}
In Table~\ref{tab:class_coherence} (d), we compare to a baseline where we do not use the coherence score, and a filtered model, where we filter all samples with coherence scores lower than 0.5. When filtering, we observe on CIFAR that the model's performance dramatically drops. FIDs are worse than the baseline, showing that dropping images prevents generating high-quality images. CAD displays improved Accuracy over the baseline while having better image quality than the filtered baseline.

\paragraph{Qualitative results for semantic conditioning.}

In Figure \ref{fig:semantic_seg} (a), we present generated images derived from the same semantic map, obtained through the prediction of a segmentation network on an image from the ADE20K dataset~\cite{zhou2017scene}. Notably, we vary the prompts and coherence maps in our experiments.
When using a uniform coherence map set at $c=1$, the generated image aligns correctly with the semantic map but lacks semantic information, resulting in an image that may not appear meaningful. To introduce synthetic coherence maps, we employ Canny edge detection on the semantic map, creating regions of low coherence at class boundaries. This approach gives the model more flexibility in adjusting the shape of different objects, leading to a more realistic image (second column). 

Additionally, we manually edit the coherence map by introducing a low-coherence region in the form of a square. As depicted in the "$\sigma$+edit" columns of Figure~\ref{fig:semantic_seg} (a), the model tends to adhere to the shape defined by the coherence map. It strategically employs the location of this low-coherence region to incorporate objects that are specified in the prompt but absent from the semantic map. This observation is particularly highlighted in the final image in Figure~\ref{fig:semantic_seg} (a), where we overlay the coherence scores of two children obtained from another image onto the manipulated coherence map, while correspondingly adjusting the prompt. The model adeptly uses the degrees of freedom and shape information provided by the low-coherence region of the coherence map to seamlessly insert the children into the image.%

\paragraph{Quantitative results for semantic conditioning.}
In Figure~\ref{fig:semantic_seg} (b), we demonstrate that incorporating both the segmentation and the coherence map leads to a decrease in FID for both scenarios, with and without the text input, indicating the superior visual quality of the generated images. This behavior is expected as our model possesses greater freedom to generate realistic content instead of strictly adhering to the segmentation map.

\section{Conclusions}
We proposed a novel method for training conditional diffusion models with additional coherence information. By incorporating coherence scores into the conditioning process, our approach allows the model to dynamically adjust its reliance on the conditioning. We also extend the classifier-free guidance, enabling the derivation of conditional and unconditional models without the need for dropout during training. We have demonstrated that our method, called coherence-aware diffusion (CAD),  produces more diverse and realistic samples on various conditional generation tasks, including classification on CIFAR10, ImageNet and semantic segmentation on ADE20k.
\\
~\\
\noindent \textbf{Limitations.} The main limitation of CAD lies in the extraction of coherence scores, as unreliable coherence scores can lead to biases in the model. Future research includes focusing on more robust and reliable methods for obtaining coherence scores to further enhance the effectiveness and generalizability of our approach.

\section{Acknowledgments}
This work was supported by ANR project TOSAI ANR-20-IADJ- 0009, and was granted access to the HPC resources of IDRIS under the allocation 2023-AD011014246 made by GENCI. We would like to thank Vincent Lepetit, Romain Loiseau, Robin Courant, Mathis Petrovich, Teodor Poncu and the anonymous reviewers for their insightful comments and suggestion.
\newpage
\clearpage
{
    \small
    \bibliographystyle{ieeenat_fullname}
    \bibliography{main}
}
\newpage
\clearpage
\appendix

\section{CAD architecture}
In this section, we describe the building blocks of the proposed CAD architecture.

\subsection{RIN architecture for class conditional with coherence}
We first explain our adaptation of the RIN architecture~\cite{jabri2022scalable}, that we use in our experiments on class conditional generation in the context of coherence aware diffusion.
As shown in  Figure~\ref{fig:rin_block_arch}, the RIN block is composed of two branches: one with the latents and one with the patches. We concatenate the timestep, coherence and conditioning embeddings to the latents (olive, green and orange blocks), with the coherence embedding being an addition of our method to the original RIN architecture. Then, first, the latents gather information from the input patches via Cross-Attention. Second, the latents are processed with N self-attention layers. Finally, the patches are updated from the latents via Cross-Attention. The RIN architecture consists of stacking multiple RIN blocks, where the next RIN block receives the updated latents and patches. 

During inference, RIN takes as input a noisy version of the image, a class, a timestep, and a coherence token to predict the noise that has been added to the clean version of the image. To improve the sampling, output latents from a given step are forwarded as input to the next denoising step. For more details, see~\cite{jabri2022scalable}.
\begin{figure}
    \centering
    \includegraphics[width=\columnwidth]{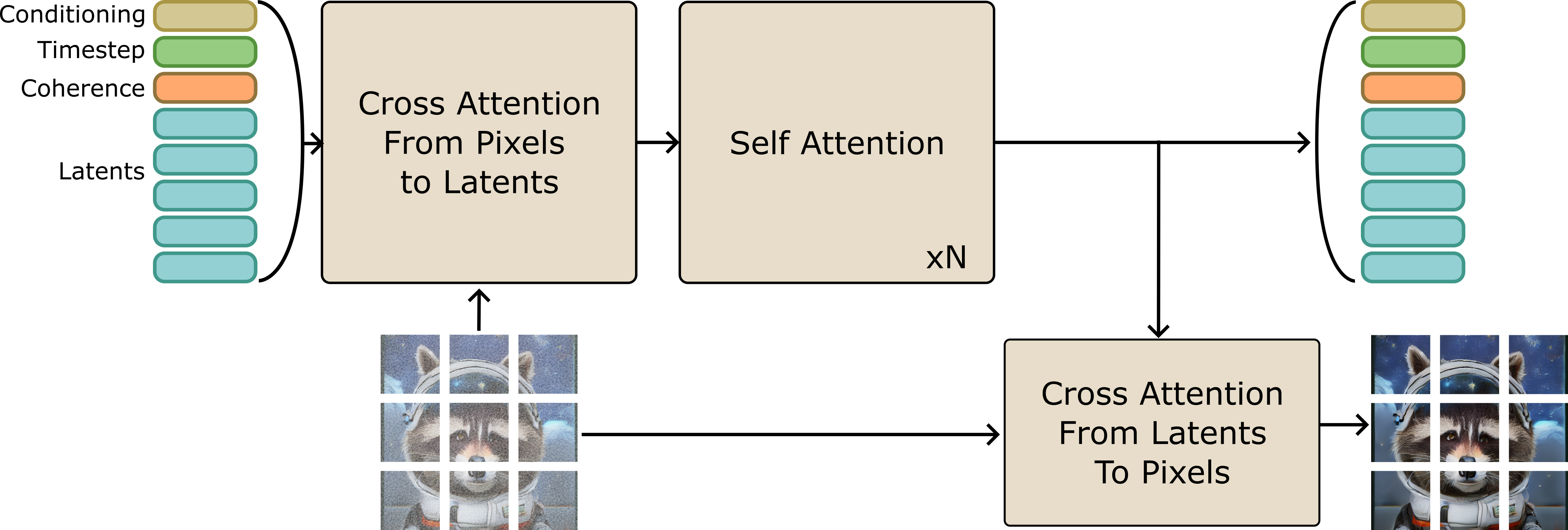}
    \caption{Architecture of the RIN Block modified to receive as input the coherence.}
    \label{fig:rin_block_arch}
\end{figure}

\section{Implementation details}
In this section, we present the implementation details for each experiment as well as the training setup.

\subsection{Text Conditining}

For the text conditional, we use the LAMB optimizer~\cite{you2019large} with a weight decay of 0.01. We use a learning rate of 0.001. The batch size is 1024.  We use a linear warmup of the learning rate for the first 10k steps and then use a cosine decay. We train all models for 300k steps. We use an EMA decay of 0.9999 for the last 50k steps. \\
We use the Stable diffusion VAE encoder~\cite{rombach2022high} and perform the diffusion process in its embedding space in which the image tokens have dimension 32x32x4.
We have 4 RIN blocks, each having 4 self-attention units. The data tokens dimension is 256 and the latent token dimension is 768. The input data is reduced to 256 data tokens by using a patch-size of 2.

\subsection{Class Conditioning}
For the class conditional experiments, we follow the hyper-parameters provided by the authors of RIN. We use the LAMB optimizer with weight decay of 0.01. We use a linear warmup of the learning rate for the first 10k steps and then use a cosine decay. We train all models for 150k steps. We use an EMA decay of 0.9999. \\
For CIFAR-10, we have 3 RIN blocks, each having 2 processing units. The data tokens dimension is 256 and the latent token dimension is 512. We use a patch-size of 2. We use a learning rate of 0.003. The batch-size is 256. \\
For ImageNet-64, we have 4 RIN blocks, each having 4 processing units. The data tokens dimension is 512 and the latent token dimension is 768. We use a patch-size of 8. We use a learning rate of 0.002. The batch-size is 1024.

\subsection{Semantic map conditioning}
\label{sup:sub:implementation_semantic}
Generations conditioned on semantic maps were obtained by training a ControlNet~\cite{zhang2023adding}.
To train the ControlNet, we created a dataset of images selected from the ADE20K~\cite{zhou2017scene}  and MS COCO~\cite{lin2014microsoft} datasets. This dataset contains tuples of the form (image, caption, semantic map, coherence map) that were generated from the original images.  The captions are obtained with BLIP2~\cite{li2023blip2}, an image captioning language model. We utilize a Maskformer~\cite{cheng2021maskformer} trained either on the MS COCO dataset or ADE20k to generate the semantic maps. To extract coherence values, we use the maximum class probability obtained from the softmax output of the Maskformer model. We employ a MaskFormer trained on ADE20K to generate the coherence map for MS COCO, and conversely, use a MaskFormer trained on MS COCO to obtain the coherence map for ADE20K. This approach helps mitigate the problem of overconfidence in predictions on the training set, reducing the tendency to have only high coherence scores across all pixels.
We used a batch size of 16, using the Adam optimizer with a learning rate of 1e-5.

\subsection{Computational cost}\label{section:GPU-hour}
Our method adds negligible training and inference time because we either modify existing architectures or add non-computationally expensive components.  Specifically, in terms of architecture, we are replacing one of the latent tokens with an embedded coherence score. For the text-conditional, we do add a new cross-attention layer, but most of the compute is still in the self-attention blocks.

In this project, we have used approximately 25,553 V100 hours for preliminary experiments including the CIFAR-10 experiments and 29,489 A100 hours for ImageNet and text-conditional experiments. Each GPU hour accounts for roughly 259 Wh for a total of 14,255kWh. 
For semantic segmentation, the training of the different ControlNets was performed using about 1,800 hours of Nvidia A100 GPUs in total, or about 470kWh. The training process required approximately 100 GPU hours for each model trained on ADE20k~\cite{zhou2017scene} and 200 GPU hours for MS COCO~\cite{lin2014microsoft}.

\section{Image from semantic map additional experiments}
\label{section:Segmentation}
In this section, we examine the effectiveness of incorporating coherence maps for semantic segmentation. We qualitatively and quantitatively compare the results of our CAD method against a baseline approach not using the coherence information.

\subsection{Quantitative results}
\label{sub:sem_quantitative}
Here, we quantitatively evaluate the effectiveness of incorporating coherence maps for semantic segmentation. For this, we experiment on two of the most popular datasets with segmentation: ADE20K~\cite{zhou2017scene} and MS COCO~\cite{lin2014microsoft}.  
To evaluate our results, we employ the Frechet Inception Distance (FID) and Inception Score (IS) for evaluating image quality. Additionally, Precision (P), Recall (R), Density (D), and Coverage (C) serve as manifold metrics, enabling an evaluation of the overlap between the generated and real image manifolds. Finally, we calculate the mean Intersection over Union (mIoU) by utilizing a pre-trained MaskFormer to predict a segmentation map from the generated image and compare it with the original semantic map. This helps illustrate the fidelity of the generated images to the ground truth.

\paragraph{Method comparison.} 
We compare the results of our CAD method with a baseline approach that excludes coherence information in both Table~\ref{tab:seg_quantitative} and Table~\ref{tab:seg_quantitative_coco}, with the complete results. We conduct experiments in two settings: with text (first two rows) and without text (last four rows). Additionally, we compare against two CAD variations. Similar to the binning strategy in text-to-image generation, ‘CAD bin’ encodes coherence into 5 equally distributed discrete bins. Furthermore, ‘CAD scalar’ utilizes a singular scalar coherence score for the entire image, equivalent to the mean of the original coherence map.

\begin{table}
  \begin{center}\captionof{table}{\textbf{Quantitative results on ADE20K when conditioning on semantic maps.} ‘CAD bin’ encodes the coherence into 5 equally distributed discrete bins. ‘CAD scalar’ uses a scalar coherence score for the whole image. CAD achieves better FID due to its enhanced ability to generate realistic objects in low coherence regions and superior mIoU as the leaked spatial information from the coherence map and the caption assist it to generate better samples (see Section~\ref{sub:sem_quantitative} for more details).}
        \resizebox{\linewidth}{!}{
        \begin{tabular}{c | ccccccc }%
                            & \multicolumn{7}{| c }{\cellcolor{lavenderblue}{\textbf{ADE20K}}}                                                          \\ \hline
        \textbf{ControlNet} & \textbf{FID}   & \textbf{IS}    & \textbf{mIoU}  & \textbf{P}      & \textbf{R}      & \textbf{D}       & \textbf{C}      \\ \hline
         Baseline           & 33.67          & \textbf{14.82} & 22.6           & 0.785           & 0.757           & 1.029            &  0.904          \\ 
         CAD                & \textbf{30.88} & 14.79          & \textbf{23.7}  & \textbf{0.844}  & \textbf{0.824}  & \textbf{1.0755}  &  \textbf{0.934} \\ \midrule
         Baseline w/o text  & 74.37          & 5.88           & 5.25           & 0.657           & 0.351           & 0.789            & 0.515           \\
         CAD w/o text       & 60.21          & 7.93           & 11.8           & 0.619           & 0.536           & 0.789            & 0.682           \\
         CAD bin            & 63.47          & 7.97           & 10.8           & 0.5875          & 0.4925          & 0.757            & 0.663           \\
         CAD scalar         & 74.69          & 6.15           & 3.11           & 0.6495          & 0.347           & 0.858            & 0.536           \\
       \end{tabular}
       }
       \label{tab:seg_quantitative}
   \end{center}
\end{table}

\begin{table}
\begin{center}
    \captionof{table}{\textbf{Quantitative results on MS COCO when conditioning on semantic maps.} ‘CAD bin’ encodes the coherence into 5 equally distributed discrete bins. ‘CAD scalar’ uses a scalar coherence score for the whole image (see Section~\ref{sub:sem_quantitative} for more details).}
    \resizebox{1.\linewidth}{!}{
    \begin{tabular}{c | ccccccc }
                         & \multicolumn{7}{| c }{\cellcolor{lavenderblue}{\textbf{COCO}}}                                                                  \\ \hline
     \textbf{ControlNet} & \textbf{FID}    & \textbf{IS}     & \textbf{mIoU}   & \textbf{P}      & \textbf{R}       & \textbf{D}       & \textbf{C}        \\ \hline
      Baseline           &  20.1           & \textbf{32.6}   & 35.1            & 0.7876          &  0.6760          & \textbf{1.0811}  &  0.8956           \\ 
      CAD                &  \textbf{18.1}  & 32.0            & \textbf{35.3}   & \textbf{0.8404} &  \textbf{0.8060} &  1.0687          &  \textbf{0.9304}  \\ \midrule
      Baseline w/o text  & 54.93           & 15.40           & 8.36            & 0.4884          & 0.4402           &  0.5297          &  0.5260           \\
      CAD w/o text       & 37.06           & 18.04           & 12.53           & 0.6222          & 0.6502           &  0.7599          &  0.7052           \\
      CAD bin            & 44.63           & 16.39           & 9.76            & 0.5636          & 0.5614           &  0.7075          &  0.6402           \\
      CAD scalar         & 55.82           & 15.92           & 8.10            & 0.5139          & 0.4382           &  0.5294          &  0.5341           \\
    \end{tabular}
    }
    \label{tab:seg_quantitative_coco}
\end{center}
\end{table}
\paragraph{Results.}
In Table~\ref{tab:seg_quantitative} and Table~\ref{tab:seg_quantitative_coco}, show the complete results of our method on ADE20k and COCO we demonstrate that using both the segmentation maps and the coherence maps lead to a decrease in FID for both scenarios, including or not the text input. This behavior is expected as our model possesses greater freedom to generate realistic content instead of sticking to the segmentation map uniquely (see \textit{e.g.}, the 4th column of Figure~\ref{fig:seg_qualitative_appendix}).
Furthermore, the improvement in the mIoU score can be attributed to two factors. First, when the input segmentation map is of low quality, the baseline method fails to capture important scene information. In contrast, our method benefits from additional information from the coherence map. Secondly, our method better leverages the caption in the low coherence region, mitigating the limitation of the segmentation map's limited number of classes (as seen in 4th row in Figure~\ref{fig:prompt_var}, there is no ping-pong table class in the COCO dataset). 

\subsection{Additional Visualizations}
We show additional results in Figure~\ref{fig:seg_qualitative_appendix}, where, from the left column to the right one, we highlight the segmentation input, the coherence map, the image generated by the baseline, the image generated by our methods and the reference image. The coherence map reveals spatial/shape details of the scene. For instance, when comparing our method to a ControlNet trained solely with the segmentation map, our approach, which incorporates both segmentation and coherence, accurately reconstructs the curtain's shape and the windows in the first row, or reconstructs a cloud in the back of the plane in the second row. Moreover, the efficacy of our method becomes even more apparent when the segmentation map is of poor quality and the coherence score is low. For instance, as shown in the third row, the basic ControlNet attempts to adhere to the limited information provided by the flawed segmentation map, resulting in a scene with multiple arms displayed (third column). In contrast, our approach benefits from the flexibility given by the coherence map, allowing for more consistent image generation. Interestingly, our method also exhibits localized self-correction, as shown on the fourth row. In particular, our model is refraining from generating the hand region due to its low coherence in the input. Finally, we demonstrate the model's responsiveness to textual input in the last row. We present in the last row an example where the original image does not contain snow, but have high coherence in the sky. Both models generate some snow based on the caption, but in line with the coherence map, our model does not generate snow in the sky, thus being closer to the real image.

\begin{figure*}
    \begin{minipage}[t]{\textwidth}
        \centering
        \subcaption*{Caption: An hotel room with a bed, chair and table:}
        \includegraphics[width=\textwidth]{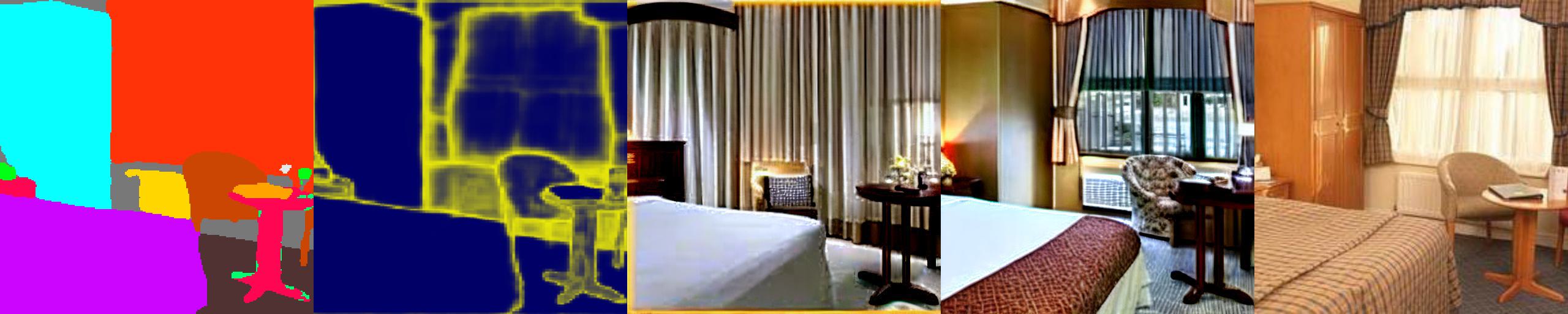}
    \end{minipage}
    \\
    \begin{minipage}[t]{\textwidth}
        \centering
        \subcaption*{Caption: A white airplane on the runway:}
        \includegraphics[width=\textwidth]{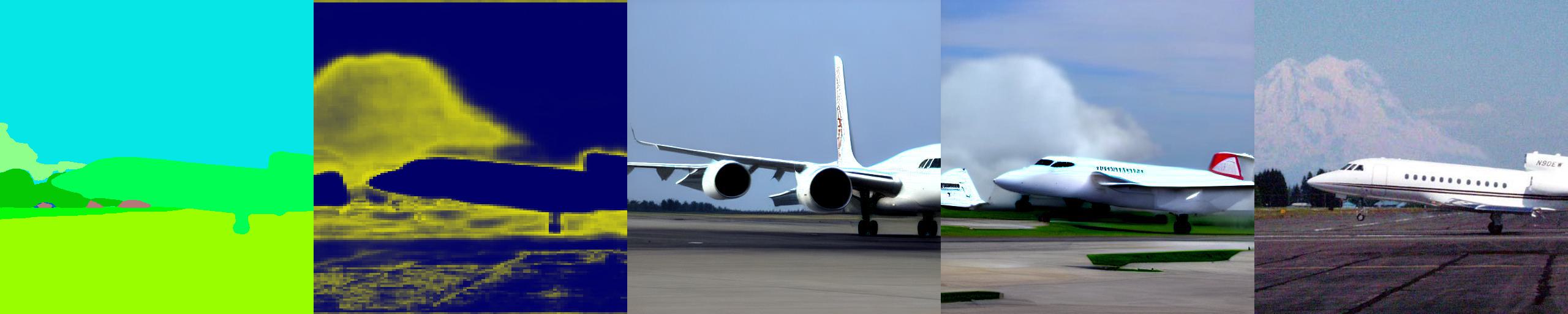}    
    \end{minipage}
    \\
    \begin{minipage}[t]{\textwidth}
        \centering
        \subcaption*{Caption: A man sitting in front of a slot machine:}
        \includegraphics[width=\textwidth]{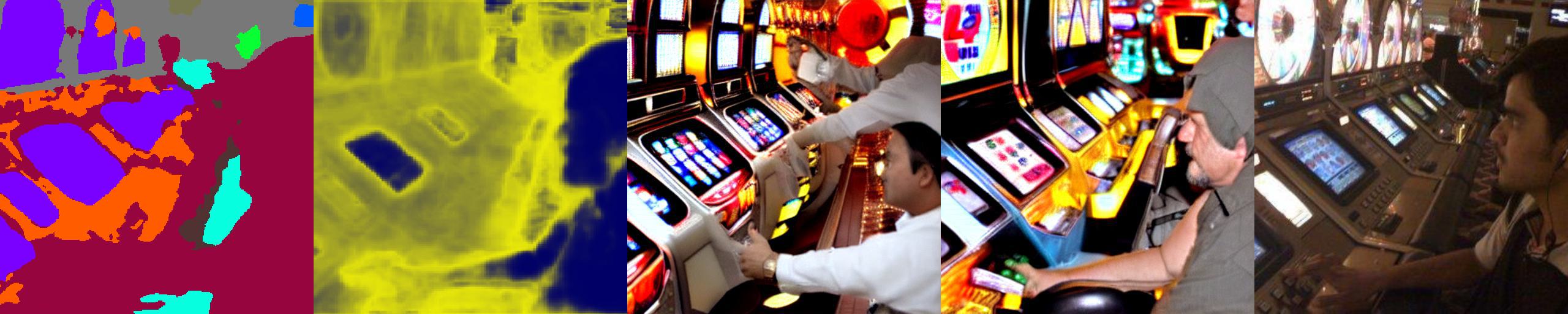}    
    \end{minipage}
    \\
    \begin{minipage}[t]{\textwidth}
        \centering
        \subcaption*{Caption: A large room with ping-pong tables and people playing:}
        \includegraphics[width=\textwidth]{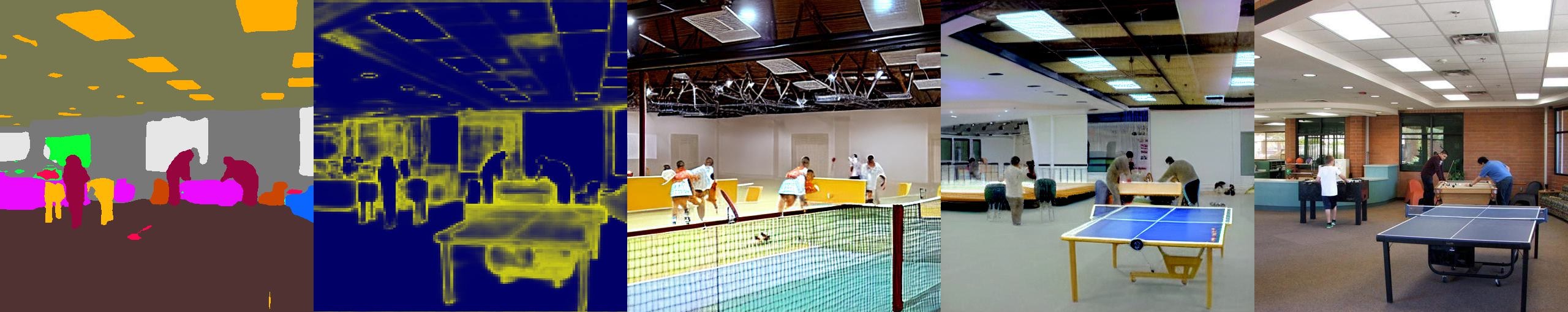}    
    \end{minipage}
    \\
    \begin{minipage}[t]{\textwidth}
        \centering
        \subcaption*{Caption: A drilling rig in the middle of a snowy field:}
        \includegraphics[width=\textwidth]{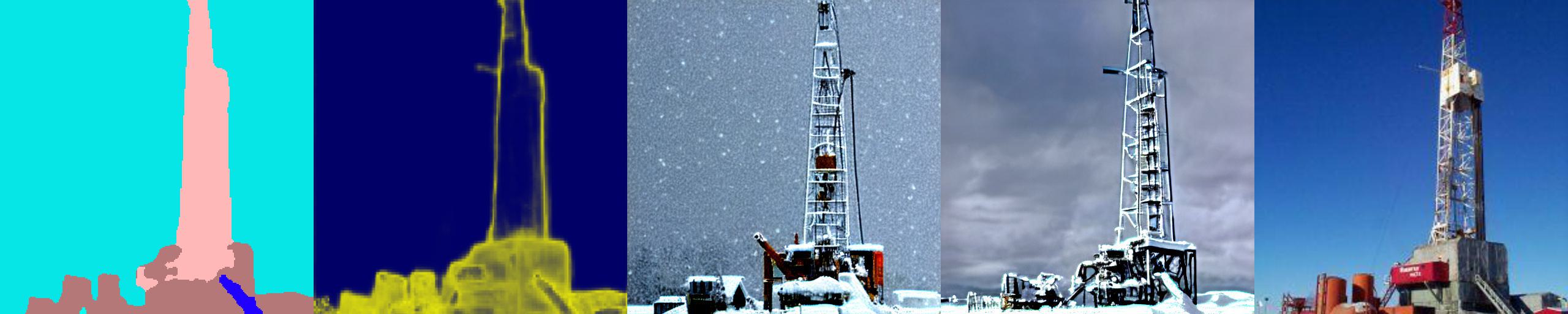}    
    \end{minipage}
    \\
    \\
    \put(0,0){\hspace{6mm} \textbf{Segmentation} \hspace{15mm} \textbf{Coherence} \hspace{18mm} \textbf{Baseline} \hspace{23mm} \textbf{Ours} \hspace{22mm} \textbf{Real Image}\hfill}
    \caption{\textbf{Qualitative results on ADE20K:} Examples of images generated conditionally to text and semantic maps.}
    \label{fig:seg_qualitative_appendix}
\end{figure*}

\subsection{Prompt generalization}
In this subsection, we demonstrate the sensitivity of our method to the caption input. We observe in Figure~\ref{fig:prompt_var} that our CAD method can successfully generalize to different types of captions, such as `dining table', `billiard', or `ping pong table' and adjust the scene accordingly. Moreover, even when the table is not explicitly mentioned in the caption (as seen on the rightmost side of the figure), our method exhibits strong generalization capabilities and successfully generates the table. 

\begin{figure*}
    \centering
    \includegraphics[trim=0cm 3.5cm 0cm 3cm, width=\textwidth]{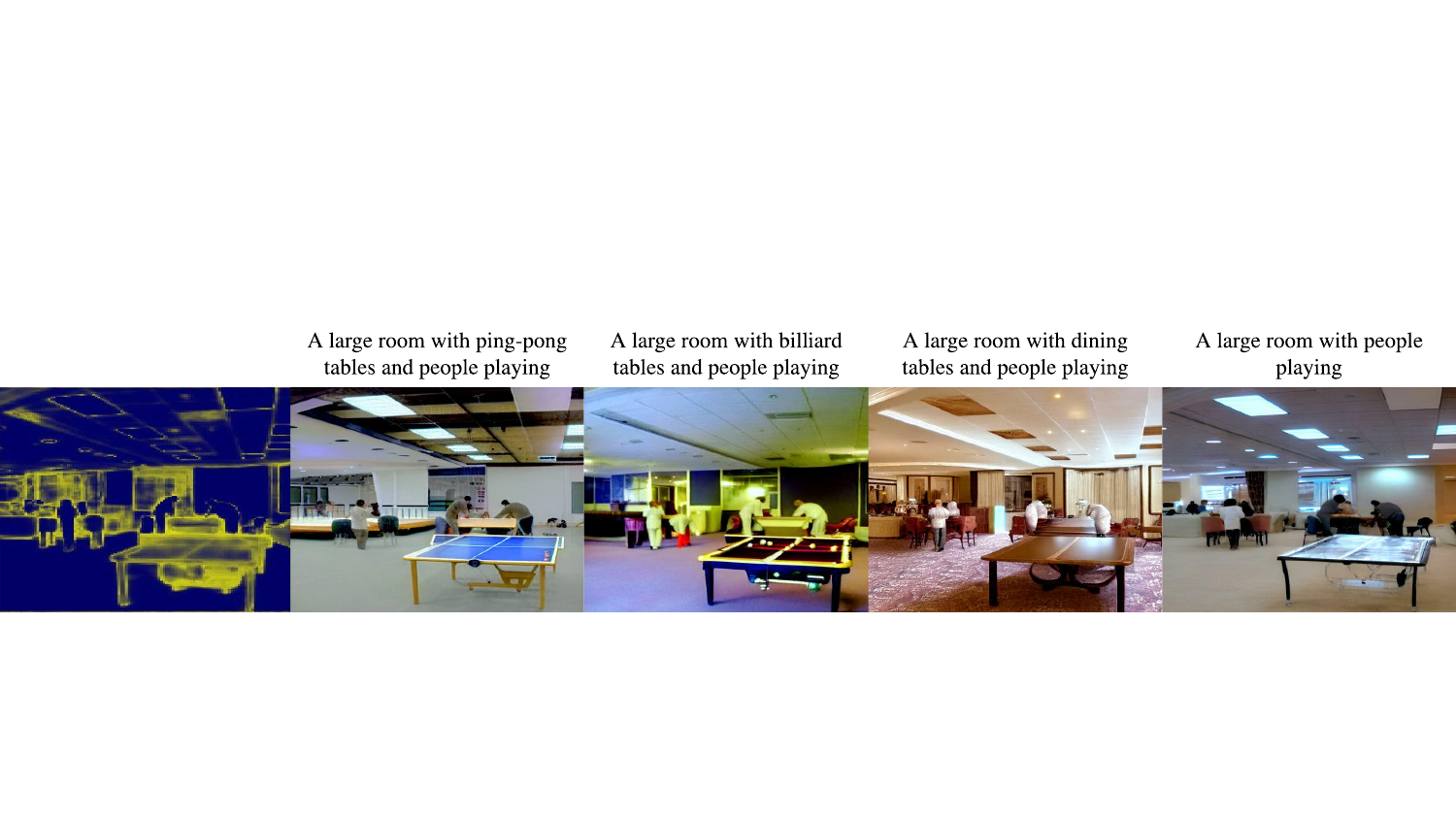}
    \caption{\textbf{Caption generalization:} Our methods demonstrate remarkable capability in leveraging the coherence map to generalize to diverse prompt inputs.}
    \label{fig:prompt_var}
\end{figure*}

\subsection{Coherence Interpolation}
In Figure~\ref{fig:coherence_interpolation}, we demonstrate the significance of the coherence map in the conditioning of the ControlNet. In this experiment, we make an interpolation of the coherence map from the maximum coherence score everywhere (left) to very low coherence (right). When the input has high coherence throughout, the generated image lacks the presence of a ping pong table as it is not present in the semantic map. However, as the coherence score decreases, our methods recognize the shape of the ping pong table and successfully generates it in the image. It is worth noting that even at a low scale of the coherence score (second column corresponds to 1e-4 times the original value), our method is still able to reconstruct the table, even if it is not in the segmentation input. When we artificially reduce the coherence (two times less coherence, in the last column), our method is still able to generate a consistent scene without any artifacts.

\begin{figure*}
    \centering
    \includegraphics[trim=0cm 2.5cm 0cm 0cm, width=\textwidth]{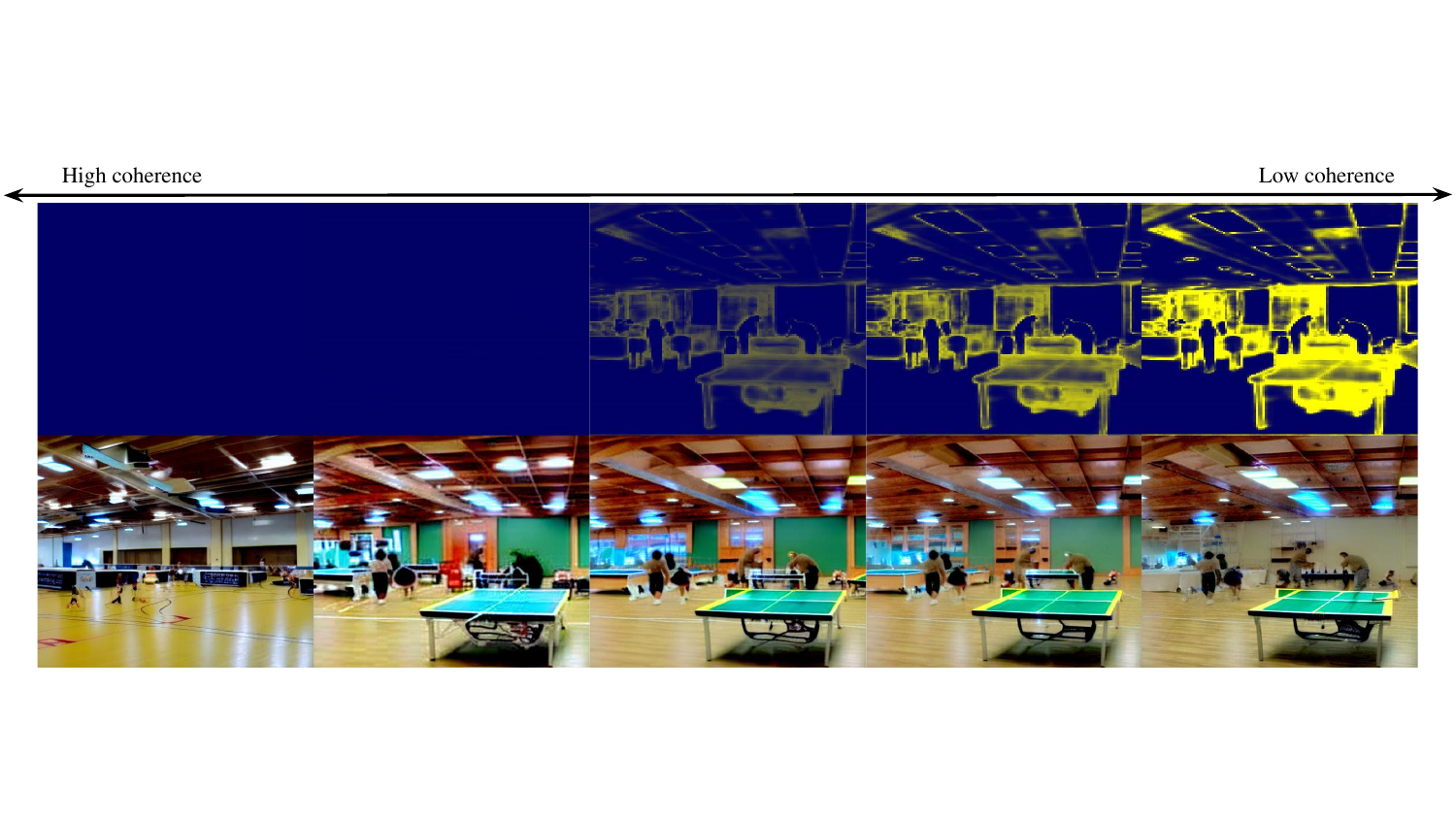}
    \caption{\textbf{Coherence interpolation:} In the first column, we artificially provide our ControlNet with a coherence map having the maximum value everywhere. Indeed, our models do not generate the ping pong tables. But, as soon as we decrease the coherence toward its original value, the ping pong tables start to appear. Finally, in the last column, we provide the model with a coherence map that is half as confident as the original value and demonstrate that we can generate an image without artifacts}
    \label{fig:coherence_interpolation}
\end{figure*}

\section{Class conditional experiments}
\subsection{Simulating annotation noise}
Our approach for class-conditional image generation relies on the assumption that the dataset comes with annotated coherence scores.
However, such scores are not always available for traditional image-generation datasets.
To address this issue, we propose to simulate annotation noise by re-sampling from a dataset with clean annotations.

We associate an error probability $\alpha$ with each label. We assume that when the annotator is wrong, they misclassify uniformly over all classes, where $N$ is the total number of classes. \
This leads to the following model:
\begin{align}
    \bar{y} \sim p_{y, \alpha} \text{   with    } 
    p_{y, \alpha}(\bar{Y} = k) =  \begin{cases}
        1-\alpha & \text{if } y=k, \\
        \frac{\alpha}{N-1} & \text{otherwise}. \\
    \end{cases}
\end{align}
We also define a strategy to remap the entire dataset using a normalized entropy-based coherence measure. We use the normalized entropy so that 0 maps to no coherence at all and 1 to total coherence in the label.
We define the following coherence function:
\begin{align}
\nonumber    E(\alpha) = &\frac{-1}{\log(N)}\left((1-\alpha)\log(1-\alpha) + \alpha \log\left(\frac{\alpha}{N-1}\right)\right) \\ 
    &\text{ for } \alpha \in \left[0,\frac{N-1}{N}\right].
\end{align}

To ensure that the dataset has samples with varying levels of confidence, we define a target entropy cumulative distribution. To achieve this, we use a piecewise-linear function:
\begin{equation}
    E_{\beta, th}(t) = \begin{cases}
                t\frac{\beta}{\kappa} & \text{if } t<\kappa, \\
                1+(t-1)\frac{1-\beta}{1-\kappa} & \text{otherwise}. \\
             \end{cases}
\end{equation}
where $\kappa$ represents a threshold and $\beta$ represents the entropy at this threshold. This function construction defines a low entropy region before the threshold and a high entropy region after the threshold.
\\
Finally, for each sample in the dataset $(X,y)$, we sample $t \in \mathcal{U}[0,1]$, and associate a target entropy $u$. We then compute the associated error probability $\alpha = E^{-1}(u)$, and resample according to $p_{y, \alpha}$ to obtain the tuple $(X, \bar{y}, 1-u)$\footnote{The coherence goes in the opposite direction of the entropy.}. This process allows us to generate synthetic data points with varying degrees of annotation noise and coherence.

\subsection{Quantitative Results}
In this section we add more results on ImageNet with different levels of noise. We observe in Table~\ref{tab:quant_class} results on ImageNet for $\beta\in\{0.2, 0.5, 0.8\}$. We first observe the less coherent the labels are the worse the results get in both image quality (FID) but also in accuracy (Acc). However, our method CAD manages to achieve better results than other methods in the context of un-coherent labels. This is further amplified when leveraging coherence aware guidance
{
\renewcommand{\arraystretch}{1.4}
\begin{table}[h]
\centering
\caption{Quantitative results for class-conditional image generation. Our coherence aware diffusion (CAD) is compared to a baseline model and a training set filtering strategy for different levels of label noise $\beta$. We show that CAD achieves higher fidelity and better accuracy.}
\resizebox{\columnwidth}{!}{
\begin{tabular}{cc | ccccccc }
\cline{3-9}
 & & \multicolumn{7}{ c }{\cellcolor{lavenderblue}{\textbf{ImageNet}}}  \\
 \hline
 \textbf{$\beta$} & \textbf{Method} & \textbf{FID} & \textbf{IS} & \textbf{Acc} & \textbf{P} & \textbf{R} & \textbf{D} & \textbf{C} \\
\hline
 & Conditional & 7.56 & 34.26 & 0.475 & 0.595 & 0.610 & 0.768 & 0.706 \\
\hline
\multirow{4}{*}{0.2} & Baseline & 11.09 & 23.54 & 0.264 & 0.562 & 0.598 & 0.670 & 0.594\\
 & Filtered & 8.53 & 29.27 & 0.389 & 0.591 & 0.609 & 0.756 & 0.688\\
 & CAD & 8.17 & 27.75 & 0.367 & 0.585 & 0.615 & 0.736 & 0.658\\
 & CA-CFG $\omega=1$ & 5.95 & 68.95 & 0.679 & 0.742 & 0.477 & 1.203 & 0.812 \\
 \hline
\multirow{4}{*}{0.5} & Baseline & 14.38 & 20.46 & 0.168 & 0.539 & 0.579 & 0.595 & 0.505 \\
 & Filtered & 10.20 & 26.59 & 0.338 & 0.573 & 0.608 & 0.707 & 0.645 \\
 & CAD & 9.11 & 25.97 & 0.327 & 0.571 & 0.610 & 0.714 & 0.633\\
 & CA-CFG $\omega=1$ & 5.95 & 68.95 & 0.679 & 0.742 & 0.477 & 1.203 & 0.812 \\
 \hline
\multirow{4}{*}{0.8} & Baseline & 20.10 & 17.28 & 0.100 & 0.502 & 0.535 & 0.526 & 0.417 \\
 & Filtered & 12.00 & 24.55 & 0.292 & 0.420 & 0.712 & 0.647 & 0.605\\
 & CAD & 11.39 & 22.08 & 0.248 & 0.558 & 0.590 & 0.682 & 0.574\\
 & CA-CFG $\omega=1$ & 6.70 & 44.27 & 0.523 & 0.695 & 0.483 & 1.035 & 0.743
\end{tabular}
}
\label{tab:quant_class}
\end{table}
}

\section{Theoretical analysis}
In this section, we motivate the use of coherence as an additional conditioning for diffusion models. Under assumptions that are verified empirically, we show that coherence aware diffusion can transition from an unconditional model to a conditional model simply by varying the coherence passed to the model.
First, we define a consistency property of the coherence embedding as follows:
\begin{definition}
    We denote \emph{coherence consistent} a conditional embedding $h(y, c)$ of the condition $y\in \mathcal{Y}$ under coherence $c\in [0, 1]$, if $\forall y_1, y_2 \in \mathcal{Y}$ we have
    \begin{align}
        \lim_{c \rightarrow 0} \|h(y_1, c) - h(y_2, c)\| = 0~.
    \end{align}
\end{definition}
In other words, an embedding is \emph{coherence consistent} if it tends to produce the same vector as the coherence approaches 0.
This property is a sufficient condition to constrain the behavior of the diffusion model. Indeed, the following proposition is easily derived from it:
\begin{proposition}
    Lipschitz continuous conditional neural diffusion models that leverage \emph{coherence consistent} embeddings for the conditioning are equivalent to unconditional models at low coherence.
\end{proposition}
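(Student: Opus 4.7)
The plan is to combine the coherence-consistency property of the embedding $h(y,c)$ with the Lipschitz continuity of the diffusion network to conclude that the output of $\epsilon_\theta$ becomes asymptotically independent of $y$ as $c \to 0$, which is the sense in which the conditional model degenerates to an unconditional one.

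First, I would make explicit (as is implicit in the architectures described in Section~\ref{CAD_method}) that the network consumes the conditioning only through the embedding $h$, so there exists a map $\tilde{\epsilon}_\theta$ with $\epsilon_\theta(X_t, y, c, t) = \tilde{\epsilon}_\theta(X_t, h(y,c), t)$. This is the point where the \emph{coherence-consistent embedding} hypothesis enters the argument: the $(y,c)$ dependence is isolated inside $h$, so the only way $y$ can influence the output is through the vector $h(y,c)$.

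Second, I would apply the Lipschitz hypothesis to $\tilde{\epsilon}_\theta$ in its embedding argument: for some constant $L$ and any fixed $X_t, t$,
\begin{equation}
\|\tilde{\epsilon}_\theta(X_t, h_1, t) - \tilde{\epsilon}_\theta(X_t, h_2, t)\| \le L\,\|h_1 - h_2\|.
\end{equation}
Chaining this with the definition of coherence consistency, for any $y_1, y_2 \in \mathcal{Y}$ we obtain
\begin{equation}
\|\epsilon_\theta(X_t, y_1, c, t) - \epsilon_\theta(X_t, y_2, c, t)\| \le L\,\|h(y_1, c) - h(y_2, c)\| \xrightarrow[c \to 0]{} 0.
\end{equation}
Hence the low-coherence limit of $\epsilon_\theta(X_t,\cdot,c,t)$ is a constant function of $y$, so for any fixed reference label $y_0$ the map $(X_t, t) \mapsto \lim_{c \to 0}\epsilon_\theta(X_t, y_0, c, t)$ is a well-defined unconditional denoiser that agrees with every conditional instance in the limit.

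The main obstacle I anticipate is conceptual rather than technical: the conclusion is a convergence statement, not an equality at $c=0$, so ``equivalent to an unconditional model'' should be read as a limit. To make this fully rigorous one would want to (i) specify whether the convergence is pointwise in $y$ or uniform over $\mathcal{Y}$ (the latter follows under compactness of $\mathcal{Y}$ together with uniformity of the coherence-consistency limit), and (ii) justify the factorization of $\epsilon_\theta$ through $h$ for the concrete RIN and ControlNet architectures used in the experiments. Both points are mild under standard assumptions, so I expect no substantive technical difficulty beyond a careful bookkeeping of the limiting arguments.
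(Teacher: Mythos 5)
Your proposal follows essentially the same route as the paper's proof: bound the difference of network outputs for two conditions $y_1, y_2$ by the Lipschitz constant times $\|h(y_1,c)-h(y_2,c)\|$, then invoke coherence consistency to make that embedding distance arbitrarily small for small $c$; the paper merely phrases this as an explicit $\eta$--$C$ statement rather than a limit. One minor caution: your closing claim that $\lim_{c\to 0}\epsilon_\theta(X_t, y_0, c, t)$ is a well-defined denoiser does not follow from the stated hypotheses (coherence consistency controls differences across labels, not convergence in $c$ for a fixed label), but this does not affect the core argument, which matches the paper's.
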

\begin{proof}
    We have to prove the following equivalent statement:
    Let $\epsilon_\theta: x_t, t, h(y, c) \mapsto \hat\epsilon_t$ be a Lipschitz continuous neural diffusion model that predicts the noise $\hat\epsilon_t$ at time $t$ from the noisy sample $x_t$ with the help of the condition $y$ embedded using the \emph{coherence consistent} embedding $h$ under coherence $c$. Then, $\forall \eta > 0$ and $\forall x_t, t, y_1 \neq y_2$, there exists $C > 0$ such that for all $0 < c \leq C$, we have
    \begin{align}
        \| \epsilon_\theta(x_t, t, h(y_1, c)) - \epsilon_\theta(x_t, t, h(y_2, c)) \|^2 < \eta~.
    \end{align}
    
    By Lipschitz property of $\epsilon_\theta$, we have $
        \| \epsilon_\theta(x_t, t, h(y_1, c)) - \epsilon_\theta(x_t, t, h(y_2, c)) \|^2 \leq L^2 \|h(y_1, c) - h(y_2, c) \|^2$.
    From the \emph{coherence consistent} property, there exists $C > 0$ such that for all $0< c \leq C, \|h(y_1, c) - h(y_2, c) \| < \sqrt{\eta}/L$.
\end{proof}
The following contrapositive necessary condition on the coherence directly follows from this proposition:
\begin{corollary}
    Lipschitz continuous conditional neural diffusion models that leverage \emph{coherence aware} embeddings require high coherence to behave like conditional models.
\end{corollary}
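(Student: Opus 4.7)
The plan is to obtain this corollary directly as the logical contrapositive of the preceding proposition. First, I would formalize the informal phrase ``behave like a conditional model'' by requiring that there exist conditions $y_1 \neq y_2 \in \mathcal{Y}$ and some gap threshold $\eta_0 > 0$ such that
\[
\| \epsilon_\theta(x_t, t, h(y_1, c)) - \epsilon_\theta(x_t, t, h(y_2, c)) \|^2 \geq \eta_0,
\]
i.e., the predicted noise genuinely varies with the conditioning. Its negation is exactly the property established in the proposition for small $c$, namely that the model's outputs on different conditions become indistinguishable.

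Next, I would apply the proposition with the specific choice $\eta = \eta_0$. This yields a threshold $C > 0$ such that for every coherence value $c$ with $0 < c \leq C$, the squared deviation between the predictions under $y_1$ and $y_2$ is strictly less than $\eta_0$. Contraposing: whenever that deviation is at least $\eta_0$ for some pair of conditions, one must have $c > C$. Thus any coherence value not bounded below by this strictly positive constant is incompatible with the model acting conditionally, which is precisely the content of the corollary.

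I do not anticipate a real technical obstacle, since the proposition already packages the Lipschitz continuity of $\epsilon_\theta$ and the coherence-consistency of $h$; the corollary is a one-line logical inversion. The only subtle step is the definitional one: the statement in the corollary is qualitative (``require high coherence''), and I would translate it into the quantitative form above, namely the existence of a positive floor $C$ on $c$ that scales with the desired conditional gap $\eta_0$. With this reading, the proof reduces to a direct contrapositive, and no new analytic work beyond what the proposition already performs is needed.
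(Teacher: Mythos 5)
Your proposal is correct and matches the paper's intended argument exactly: the paper presents this corollary without a written proof, describing it only as the ``contrapositive necessary condition'' that ``directly follows'' from the proposition, and your contrapositive with $\eta=\eta_0$ is precisely that one-line inversion, made quantitative. The only addition you make is the (reasonable) formalization of ``behaves like a conditional model'' as a lower bound $\eta_0$ on the output gap for some pair $y_1\neq y_2$, which the paper leaves implicit.
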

In practice, we show in the experiments that the coherence consistency property tends to naturally emerge during training and that consequently coherence aware diffusion provides a tunable prompt parameter to sample from unconditional to conditional models.

\section{Additional Qualitative Results}
In this section, we provide additional samples from our method. Most of the prompts are sampled from the Lexica.art website.

\def\captionsquantjoin{0/An old-world galleon navigating through turbulent ocean waves under a stormy sky\, lit by flashes of lightning, 1/an oil painting of rain at a traditional Chinese town, 2/portrait photo of a asia old warrior chief\, tribal panther make up\, blue on red\, side profile\, looking away\, serious eyes\, 50mm portrait photography\, hard rim lighting photography,3/a blue jay stops on the top of a helmet of Japanese samurai\, background with sakura tree, 4/A cute little matte low poly isometric cherry blossom forest island\, waterfalls\, lighting\, soft shadows\, trending on Artstation\, 3d render\, monument valley\, fez video game., 5/Underwater cathedral, 6/A cozy gingerbread house nestled in a dusting of powdered sugar snow\, adorned with vibrant candy canes and shimmering gumdrops, 7/a teddy bear wearing blue ribbon taking selfie in a small boat in the center of a lake, 8/Pirate ship trapped in a cosmic maelstrom nebula\, rendered in cosmic beach whirlpool engine\, volumetric lighting\, spectacular\, ambient lights\, light pollution\, cinematic atmosphere\, art nouveau style\, illustration art artwork by SenseiJaye\, intricate detail.}
\begin{figure*}[h]
\caption{Samples from our CAD-B model at 512 resolution with associated caption}
\centering
\foreach \i/\captionquant in  \captionsquantjoin{
  \begin{minipage}[t]{0.30\textwidth}
  \includegraphics[width=\textwidth]{images/supp_mat_samples_with_text/\i.jpg}
  \caption*{\footnotesize{\captionquant}}
  \end{minipage}
}

\end{figure*}

\begin{figure*}[h]
\centering
\foreach \i in {1,...,12} {
  \begin{minipage}{0.32\textwidth}
  \includegraphics[width=\textwidth]{images/supp_mat_samples/image_\i.jpg}
  \end{minipage}
}
\caption{Samples from our CAD-B model at 512 resolution}
\end{figure*}

\begin{figure*}[h]
\centering
\foreach \i in {13,...,24} {
  \begin{minipage}{0.32\textwidth}
  \includegraphics[width=\textwidth]{images/supp_mat_samples/image_\i.jpg}
  \end{minipage}
}
\caption{Samples from our CAD-B model at 512 resolution}
\end{figure*}

\begin{figure*}[h]
\centering
\foreach \i in {25,...,36} {
  \begin{minipage}{0.32\textwidth}
  \includegraphics[width=\textwidth]{images/supp_mat_samples/image_\i.jpg}
  \end{minipage}
}
\caption{Samples from our CAD-B model at 512 resolution}
\end{figure*}

\begin{figure*}[h]
\centering
\foreach \i in {37,...,48} {
  \begin{minipage}{0.32\textwidth}
  \includegraphics[width=\textwidth]{images/supp_mat_samples/image_\i.jpg}
  \end{minipage}
}
\caption{Samples from our CAD-B model at 512 resolution}
\end{figure*}

\end{document}